\newcommand{\ood}{o.o.d.\xspace}
\newcommand{\iid}{i.i.d.\xspace}
\newcommand{\vol}{\text{vol}}
\newcommand{\bb}[1]{\mathbb{#1}}
\renewcommand{\c}[1]{\mathcal{#1}}
\DeclareMathOperator*{\argmax}{arg\,max}
\theoremstyle{definition}
\theoremstyle{plain}
\declaretheorem[name=Lemma, numberwithin=section]{lemma}
\title{Generalizing to any diverse distribution: uniformity, gentle finetuning \& rebalancing} 
\author{
  Andreas Loukas$^*$ \\
  Roche \\
  \texttt{andreas.loukas@roche.com} \\
  \and
  Karolis Martinkus$^*$ \\
  Roche \\
  \texttt{karolis.martinkus@roche.com} \\
  \and
  Ed Wagstaff \\
  Roche \\
  \texttt{ed.wagstaff@roche.com} \\
  \and
  Kyunghyun Cho \\
  Genentech \\
  \texttt{kyunghyun.cho@genentech.com} 
}
\date{}
\begin{document}
\maketitle

\begin{abstract}
    As training datasets grow larger, we aspire to develop models that generalize well to any diverse test distribution, even if the latter deviates significantly from the training data.
    Various approaches like domain adaptation, domain generalization, and robust optimization attempt to address the out-of-distribution challenge by posing assumptions about the relation between training and test distribution. Differently, we adopt a more conservative perspective by accounting for the worst-case error across all sufficiently diverse test distributions within a known domain. Our first finding is that training on a uniform distribution over this domain is optimal. We also interrogate practical remedies when uniform samples are unavailable by considering methods for mitigating non-uniformity through finetuning and rebalancing. Our theory provides a mathematical grounding for previous observations on the role of entropy and rebalancing for \ood generalization and foundation model training. We also provide new empirical evidence across tasks involving \ood shifts which illustrate the broad applicability of our perspective.  
\end{abstract}

\def\thefootnote{*}\footnotetext{Authors contributed equally.}\def\thefootnote{\arabic{footnote}}

\section{Introduction}

Machine learning usually starts with the assumption that the test data will be independent and identically distributed (\iid) with the training set. In practice, distributional shifts are the norm rather than the exception, leading to models that perform well in training but may stumble when faced with the diversity the real world has to offer.

The challenge of out-of-distribution (\ood) generalization has inspired a variety of approaches aimed at bridging the training and inference gap. 
For example, approaches like domain adaptation and generalization address \ood challenges by assuming knowledge of the unlabeled test distribution or by learning invariant features~\citep{bengio2013representation,peters2016causal,arjovsky2019invariant,rosenfeldrisks, koyama2020invariance}, whereas robust optimization~\citep{ben2009robust,rahimian2019distributionally} methods can be used to defend against data uncertainty by modifying and regularizing training.

We take a different perspective and seek models that perform well under \textit{any} diverse test distribution within a known domain. This is formalized through the concept of \textit{distributionally diverse (DD) risk}, which quantifies the worst-case error across all distributions with sufficiently high entropy. Our postulate that test entropy is large reflects our intention to characterize a model's performance on a sufficiently diverse set of natural inputs rather on adversarial examples. 
Our focus on entropy is also motivated by the previous empirical finding that higher entropy in training and test data is a strong predictor of \ood generalization~\citep{vedantam2021empirical}.

The introduced framework provides a new angle to study \ood generalization. Differently from domain generalization, we do not assume that the training data are composed of multiple domains. Further, unlike domain adaptation and robust optimization, we do not assume to know the unlabeled test distribution nor that the latter lies close to the training distribution. A more comprehensive discussion of how our ideas relate to previous work can be found in Appendix~\ref{app:related_work}. 

Our analysis starts in Section~\ref{sec:uniform} by showing that DD risk minimization has a remarkably simple solution when we have control over how training data are sampled. Specifically, we prove that training on the uniform distribution over the domain of interest is optimal in the worst-case scenario and derive a matching bound on the corresponding DD risk. 
 
Section~\ref{sec:non_uniform} then explores what happens when the training data is non-uniformly distributed. We analyze two approaches. First, we show that gentle finetuning of a pretrained model (as opposed to considerably deviating from the pre-training initialization) can suffice to overcome the non-uniformity issue. Second, we draw inspiration from test-time adaptation and formally consider the re-weighting of training examples to correct distributional imbalances. Therein, we provide an end-to-end generalization bound that jointly captures the trade-off that training set rebalancing introduces between in- and out-of-distribution error. 

The above results agree with past studies and justify previously identified heuristics. Our analysis of finetuning provides an explanation for the observation of Chen et al.~\citep{chen2024preference} (specifically Figure 4) that when finetuning large language models to respect human preferences, \iid and \ood metrics correlate only close to finetuning initialization. Further, our findings on the role of uniformity and on benefit of rebalancing provide a mathematical grounding for previous group and label rebalancing heuristics shown to improve worst-group accuracy~\citep{pmlr-v177-idrissi22a} and align with emerging empirical observation within foundation model training, such as large language models~\citep{gao2020pile,furuta2023language,dai2024bias} and AlphaFold~\citep{jumper2021highly,abramson2024accurate}, where heuristic ways to do training set rebalancing, such as clustering or controlling the data source mixture, are adopted to improve generalization and to remove bias.   
    
Our insights are further evaluated in syntetic and real world tasks featuring distribution shifts. After first validating the theory in a controlled and tractable setup involving a mixture of Gaussian distributions, we turn our attention to complex tasks involving covariate shift. Therein, we find that rebalancing can enhance empirical risk minimization when the density can be reasonably estimated. These results exemplify the practical benefits and pitfalls of the considered approaches.

\section{Distributionally diverse risk}

A holy grail of supervised learning is to identify a function that minimizes the worst-case risk
\begin{align}
    r_{\text{wc}}(f) = \max_{x \in \mathcal{X}} l\bigl( f(x), f^*(x) \bigr)\,, 
\end{align}
where $l$ is a loss function, such as the zero-one or cross-entropy loss for classification, $x \in \mathcal{X}$ are examples from some domain of interest, and $f^*(x)$ is some unknown target function.
Unfortunately, it is straightforward to deduce that minimizing the worst-case risk by learning from observations is generally impossible unless one is given every possible input-output pair. 

The usual way around the impossibility of worst-case learning involves accepting some probability of error w.r.t. a distribution $p$. The expected risk is defined as  
\begin{align}
    r_{\text{exp}}(f; p) = \mathbb{E}_{x \sim p} \Bigl[ l \bigl( f(x), f^*(x) \bigr) \Bigr].  
\end{align}
The above definition is beneficial because it allows us to tractably estimate the error of our model using a validation set or a mathematical bound. However, the obtained guarantees are limited to \iid examples from $p$, and the model's predictions can be due to spurious correlations and entirely unpredictable, otherwise. 
The focus of this work is to propose an alternative requirement that bridges the gap between the worst- and average-case perspectives.

We instead look for models whose average-case error under \textit{any} sufficiently diverse distribution within a domain $\c{X}$ is bounded. Concretely, consider a compact domain of interest $\c{X}$ that contains the test data under consideration as a subset with sufficiently high probability, i.e., $q(\c{X}) \approx 1$ for any test distribution $q$. In the small and medium data regimes, the domain $\c{X}$ should be defined by prior knowledge about the task in question. In the large data regime, such as when training foundation models, we may consider $\c{X}$ as the set of all natural objects. 
Further, let $\c{Q}_\gamma$ be the set of distributions $q$ supported on $\c{X}$ with entropy $H(q) \geq H(u) - \gamma$, where $H(u) = \log(\text{vol}(\c{X}))$ is the entropy of the uniform distribution on $\c{X}$ expressed in `nat' ($\log$ indicates the natural logarithm).
We define the distributionally diverse (DD) risk as follows:
\begin{align}
    r_{\text{dd}}(f; \gamma) = \max_{q \in \c{Q}_\gamma} \mathbb{E}_{x \sim q} \Bigl[ l\bigl( f(x), f^*(x) \bigr) \Bigr]\,. 
    \label{def:agnostic_risk}
\end{align}
In simple terms, DD risk seeks to measure performance across a broad range of diverse distributions, rather than a single, known distribution. DD risk subsumes the worst-case risk as a special case,
\begin{align}
    \lim_{\gamma \to \infty} r_{\text{dd}}(f; \gamma) =  r_{\text{wc}}(f) \,,
\end{align}
which follows from that, as $\gamma$ increases, $\c{Q}_\gamma$ contains distributions all of whose mass lies arbitrarily close to the point of maximal loss.
Though it is easy to also deduce that the DD risk is always larger than the expected risk, it turns out that the gap between the two can be zero when considering the optimal learner. We refer to Appendix~\ref{app:minmax} for a more in depth analysis of this theoretical topic and turn our attention to more practical matters.

\section{Uniform is optimal and implied guarantees}
\label{sec:uniform}

This section argues that --all other things being equal-- it is preferable in terms of distributionally diverse risk to train your classifier on the uniform distribution. We then characterize the DD risk of a classifier that achieves a certain expected risk on the uniform distribution. 

\subsection{Learning from a uniform distribution is DD risk optimal}

Suppose that there exists some unknown function $f^*: \c{X} \to \c{Y}$ and that the learning algorithm determines a classifier $f: \c{X} \to \c{Y}$ whose expected risk with respect to a distribution $p$ is $\varepsilon$.   

Denote by $\mathcal{F}_{p, \varepsilon}$ the set of all classification functions that the learner may have selected: 
\begin{align}
    \mathcal{F}_{p, \varepsilon} := \{ f: \c{X} \to \c{Y} \ \text{such that} \ \mathbb{E}_{x \sim p} [ \ell\big(f(x), f^*(x)\big) ] = \varepsilon\}. 
\end{align}

In the following theorem, we consider how the choice of the training distribution $p$ affects the worst-case DD risk within $\mathcal{F}_{p, \varepsilon}$. 
\begin{restatable}{theorem}{uniform}
Consider a zero-one loss and suppose that we can train a classifier up to some fixed expected risk $\varepsilon < \sfrac{1}{2}$ under any distribution. A classifier optimized for the uniform distribution will yield the smallest DD risk:
\begin{align}
 \max_{f \in \mathcal{F}_{u, \varepsilon}} r_{\text{dd}}(f; \gamma) \leq \max_{f \in \mathcal{F}_{p,\varepsilon}} r_{\text{dd}}(f; \gamma) \quad \text{for all} \quad p \neq u.
\end{align}
\label{theorem:uniform_is_worst_case_optimal}
\end{restatable}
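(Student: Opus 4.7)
The plan is to recast the problem in terms of error sets $E_f := \{x \in \mathcal{X} : f(x) \neq f^*(x)\}$ and reduce the worst-case DD risk to a function of $\vol(E_f)$ alone. Under the zero-one loss, the training constraint $p(E_f) = \varepsilon$ defining $\mathcal{F}_{p, \varepsilon}$ and the inner quantity $q(E_f)$ appearing in $r_{\text{dd}}$ are both just measures of $E_f$, so both sides of the theorem can be written as suprema over admissible error sets.

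The first structural step is to show that $r_{\text{dd}}(f; \gamma) = \phi(\vol(E_f))$ for some non-decreasing function $\phi$. The entropy functional defining $\mathcal{Q}_\gamma$ is invariant under any measure-preserving rearrangement of $\mathcal{X}$, so for any two sets $E, E'$ of equal Lebesgue volume the optimisations $\max_{q \in \mathcal{Q}_\gamma} q(E)$ and $\max_{q \in \mathcal{Q}_\gamma} q(E')$ coincide via a volume-preserving bijection pushing $q$ forward. Monotonicity of $\phi$ then follows from $q(E) \leq q(E')$ whenever $E \subseteq E'$. Consequently, since every $f \in \mathcal{F}_{u, \varepsilon}$ has $\vol(E_f) = \varepsilon \vol(\mathcal{X})$, the left-hand side of the theorem equals $\phi(\varepsilon \vol(\mathcal{X}))$.

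What remains is to exhibit some $f' \in \mathcal{F}_{p, \varepsilon}$ whose error set has volume at least $\varepsilon \vol(\mathcal{X})$, after which monotonicity of $\phi$ closes the argument. I would take $E_{f'}$ to be a sub-level set $\{x \in \mathcal{X} : p(x) \leq c\}$ of the training density, with the threshold $c$ chosen so that $p(E_{f'}) = \varepsilon$ (splitting any atom of $p$ at $c$ with a partial indicator). The main obstacle is verifying this volume bound, which I would handle by contradiction: if $\vol(E_{f'}) < \varepsilon \vol(\mathcal{X})$, the average $p$-density on $E_{f'}$ would strictly exceed $1/\vol(\mathcal{X})$, forcing $c > 1/\vol(\mathcal{X})$; then every point of the complement has density at least $c$, and integrating $p$ over $E_{f'}^c$ would yield $1 - \varepsilon > 1 - \vol(E_{f'})/\vol(\mathcal{X})$, contradicting the starting assumption. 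The hypothesis $\varepsilon < \sfrac{1}{2}$ is not strictly needed by this structural argument and presumably fixes the regime in which the resulting classifier remains better than chance.
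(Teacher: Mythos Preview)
Your proposal is correct and follows the same high-level structure as the paper: reduce the DD risk to a function of the error-set volume, establish monotonicity, note that every $f\in\mathcal{F}_{u,\varepsilon}$ has error volume exactly $\varepsilon\,\vol(\mathcal{X})$, and then exhibit some $f\in\mathcal{F}_{p,\varepsilon}$ with error volume at least as large.

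The technical implementations differ in two places, and in both your route is arguably cleaner. For the reduction and monotonicity, the paper proves an explicit lemma characterising the maximum-entropy density $q^*_\epsilon$ that places mass $\epsilon$ on $\mathcal{E}$, and then argues from the resulting entropy formula that the DD risk is increasing in $\vol(\mathcal{E})$; this is where the hypothesis $\varepsilon<\sfrac{1}{2}$ enters (to keep $\vol(\mathcal{E})\le\vol(\mathcal{X}-\mathcal{E})$ so that the derivative has the right sign). Your symmetry-plus-containment argument bypasses that computation entirely and, as you note, does not need $\varepsilon<\sfrac{1}{2}$. For the existence of a large-volume error set under $p\neq u$, the paper argues informally that non-uniformity forces some region of $p$-mass $\varepsilon$ to have volume strictly greater than $\varepsilon\,\vol(\mathcal{X})$; your sub-level-set construction with the averaging contradiction makes this step concrete. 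The paper's explicit $q^*_\epsilon$ construction does buy something elsewhere, however: it is reused verbatim in the proof of Theorem~\ref{theorem:expected_agnostic_gap} to derive the quantitative bound on the DD risk, so within the paper it is not wasted effort.
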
 
The proof can be found in Appendix~\ref{app:uniform}.
Intuitively, a uniform distribution is optimal because
it balances the model's performance across the entire input space, preventing overemphasis of specific areas.
The reader might suspect that this result is a consequence of the maximum entropy principle, stating that within a bounded domain the uniform distribution has the maximum entropy. This is indeed accurate, although the derivation is not a straightforward application of this result: the maximum entropy principle constrains the choice of the worst-case distribution $q^*$ within $\c{Q}_\gamma$.    

A particularly appealing consequence of the theorem is that the exact entropy gap $\gamma$ is not necessary to determine the optimal training strategy. As we shall see later, $\gamma$ does affect the DD risk that we can expect. However, from a practical perspective it is preferable to have a training strategy that is independent of $\gamma$, as it may not be straightforward to define it.   

It is also important to discuss when a uniform distribution is not the optimal choice for training. Our first disclaimer is that the theorem  does not account for any inductive bias in learning, e.g., as afforded by the choice of data representation, model type, and optimization. The theorem also does not consider the pervasive issue of \iid generalization, meaning how close the empirical risk approximates the expected one, which is analysed in Section~\ref{subsec:rebalancing}. Finally, the theorem is less relevant when there is additional information about the test distribution, such as unlabeled test samples.

\subsection{The gap between expected and distributionally diverse risk}

Our next step entails characterizing the relation between distribution DD risk and expected risk. We will show that the DD risk can be upper bounded by the expected risk with respect to the uniform distribution, implying that expected risk minimization with a uniform distribution is a good surrogate for DD risk minimization.  

Supposing we know the expected risk $r_{\text{exp}}(f; u)$ of our classifier on the uniform distribution, the following result upper bounds the DD risk as a function of $\gamma$: 

\begin{restatable}{theorem}{expectedagnosticgap}
The DD risk of a classifier under the zero-one loss is at most 
\begin{align*}
   r_{\text{dd}}(f; \gamma)  \leq  \min \left\{ \frac{\gamma - \log\left( \frac{1-\alpha}{1-r_{\text{exp}}(f; u)}\right)}{\log\left( \frac{\alpha}{1-\alpha}\right) + \log\left( \frac{1}{r_{\text{exp}}(f; u)}-1\right)}, \ r_{\text{exp}}(f; u) + \sqrt{\frac{\gamma}{2}} \right\},
\end{align*}
where $\alpha \in (r_{\text{exp}}(f; u),1)$ may be chosen freely. The DD risk is below 1 for $r_{\text{exp}}(f; u) < e^{-\gamma}$.
\label{theorem:expected_agnostic_gap}
\end{restatable}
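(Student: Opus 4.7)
The plan is to reduce the theorem to a constrained optimization over distributions on the error set of $f$ and then apply standard concentration-type inequalities for KL divergence. Write $\mathcal{E} = \{x \in \mathcal{X} : f(x) \neq f^*(x)\}$ and $\varepsilon = r_{\text{exp}}(f; u) = u(\mathcal{E})$. Under zero-one loss, $\mathbb{E}_{x\sim q}[l(f(x),f^*(x))] = q(\mathcal{E})$, so the DD risk equals $\max\{q(\mathcal{E}) : q \in \mathcal{Q}_\gamma\}$. The key observation is that the entropy constraint defining $\mathcal{Q}_\gamma$ is equivalent to a KL constraint: since $u$ is uniform on $\mathcal{X}$, $H(u) - H(q) = \mathrm{KL}(q \,\|\, u)$, so $q \in \mathcal{Q}_\gamma$ iff $\mathrm{KL}(q \,\|\, u) \le \gamma$. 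Thus both bounds reduce to controlling $q(\mathcal{E})$ given $u(\mathcal{E}) = \varepsilon$ and $\mathrm{KL}(q \,\|\, u) \le \gamma$.

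For the second (Pinsker) term, I would simply apply Pinsker's inequality: $\mathrm{TV}(q, u) \le \sqrt{\mathrm{KL}(q \,\|\, u)/2} \le \sqrt{\gamma/2}$, which immediately yields $q(\mathcal{E}) \le u(\mathcal{E}) + \sqrt{\gamma/2} = \varepsilon + \sqrt{\gamma/2}$.

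For the first, tighter term, the plan is to use the Donsker--Varadhan variational formula (equivalently, a Chernoff-type exponential bound). For any $\lambda > 0$, taking $\phi = \lambda \mathbf{1}_{\mathcal{E}}$ gives
\begin{align*}
\lambda\, q(\mathcal{E}) \;=\; \mathbb{E}_q[\phi] \;\le\; \mathrm{KL}(q \,\|\, u) + \log \mathbb{E}_u[e^{\phi}] \;\le\; \gamma + \log\bigl(\varepsilon e^{\lambda} + 1 - \varepsilon\bigr).
\end{align*}
I would then pick the free parameter $\lambda$ via the reparametrization $\lambda = \log\frac{\alpha(1-\varepsilon)}{(1-\alpha)\varepsilon}$, which is positive exactly when $\alpha \in (\varepsilon, 1)$ — matching the theorem's range for $\alpha$. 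A short algebraic simplification shows $\varepsilon e^\lambda + 1 - \varepsilon = (1-\varepsilon)/(1-\alpha)$, so $\log(\varepsilon e^\lambda + 1 - \varepsilon) = -\log\bigl((1-\alpha)/(1-\varepsilon)\bigr)$ and the denominator $\lambda$ equals $\log\bigl(\alpha/(1-\alpha)\bigr) + \log\bigl(1/\varepsilon - 1\bigr)$. Dividing through yields exactly the first term.

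For the final assertion that $r_{\text{dd}}(f;\gamma) < 1$ when $\varepsilon < e^{-\gamma}$, I would note that any $q$ achieving $q(\mathcal{E}) = 1$ must be supported on $\mathcal{E}$, and among such distributions the KL to $u$ is minimized by $u$ restricted to $\mathcal{E}$, giving $\mathrm{KL}(q \,\|\, u) \ge \log(1/\varepsilon) > \gamma$, contradicting $q \in \mathcal{Q}_\gamma$. (Alternatively, send $\alpha \to 1$ in the first bound and check that the numerator stays bounded while the denominator grows, or observe that $\mathrm{KL}(q\|u)\le\gamma<\log(1/\varepsilon)$ rules out point-mass-like $q$ via the binary KL obtained by coarse-graining to $\mathcal{E}$ and $\mathcal{X}\setminus\mathcal{E}$, which contracts KL.) The main subtlety in the whole argument is identifying the right $\lambda$ to recover the stated parametric family of bounds; once the problem is phrased in KL form, Donsker--Varadhan does the heavy lifting, and the rest is bookkeeping.
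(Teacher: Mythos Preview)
Your proposal is correct and reaches the same bounds, but by a genuinely different route than the paper. The paper first invokes Lemma~\ref{lemma:error_volume} to explicitly identify the extremal $q$ as piecewise uniform on $\c{E}$ and $\c{X}\setminus\c{E}$, which reduces the entropy constraint to a \emph{binary} KL divergence $D_{\text{KL}}(\text{Ber}(\epsilon)\,\|\,\text{Ber}(r_{\text{exp}}(f;u)))\le\gamma$; it then applies Pinsker to this binary KL for the additive term and, for the first term, lower-bounds the convex function $\epsilon\mapsto D_{\text{KL}}(\text{Ber}(\epsilon)\,\|\,\text{Ber}(r_{\text{exp}}))$ by its tangent line at $\epsilon=\alpha$. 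You instead observe directly that $H(u)-H(q)=\mathrm{KL}(q\,\|\,u)$ and work with the full KL: Pinsker on $\mathrm{KL}(q\,\|\,u)$ gives the additive term immediately, and Donsker--Varadhan with $\phi=\lambda\mathbf{1}_{\c{E}}$ gives the first term after your reparametrization of $\lambda$ in terms of $\alpha$. The two arguments are dual---your Chernoff/Donsker--Varadhan inequality with a fixed $\lambda$ is exactly the Legendre dual of the paper's tangent-line lower bound on the convex binary KL---but your version is more streamlined: it bypasses the explicit extremal construction of Lemma~\ref{lemma:error_volume} (the coarse-graining to $\{\c{E},\c{X}\setminus\c{E}\}$ is implicit in choosing an indicator test function). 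The paper's route, in exchange, makes the structure of the worst-case $q$ explicit, which it also uses elsewhere. Your argument for $r_{\text{dd}}<1$ when $r_{\text{exp}}(f;u)<e^{-\gamma}$ is likewise correct and slightly cleaner than the paper's discussion.
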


We defer the proof to Appendix~\ref{app:proof:expected_agnostic_gap}. Our experiments confirm that Theorem~\ref{theorem:expected_agnostic_gap} is  non-vacuous.

To gain intuition, we set $\alpha = \frac{1}{2}$ and make further simplifications to obtain the following simpler (but less tight) expression:
\begin{align}
   r_{\text{dd}}(f; \gamma) \leq \min \left\{ \frac{\gamma + \log(2)}{-\log\left(r_{\text{exp}}(f; u)\right)}, \ r_{\text{exp}}(f; u) + \sqrt{\frac{\gamma}{2}}  \right\}.
\end{align}
For convenience, we refer to the two arguments to $\min$ in this bound based on their dependency on the expected risk, as ``inverse-negative-logarithmic'' (first argument) and ``additive'' (second argument).
The additive bound is more informative for smaller entropy gaps $\gamma$. Indeed, the bound reveals that the DD-uniform gap tends to 0 as $\gamma \to 0$. 
On the other hand, the inverse-negative-logarithmic bound captures more closely the behavior of the DD risk as the expected risk tends to zero, since the function $h(x) = 1/(-\log(x))$ also approaches zero. 
More generally, our analysis shows that the uniform expected risk should be below $e^{-\gamma}$ to ensure that the DD risk is small, pointing towards a curse of dimensionality unless the test distribution is sufficiently diverse. Specifically, we cannot expect to have a model that is robust to any diverse test distribution shift unless $\gamma = O(1)$. 

\section{Distributionally diverse risk without uniform samples}
\label{sec:non_uniform}

Although uniform is worst-case optimal, in practice we have to content with samples $Z = \{z_i\}_{i=1}^n$ with $z = (x,f^*(x))$ and $x$ drawn from some arbitrary training distribution with probability density function $p$. Let us denote by $p_Z$ the empirical measure $p_Z = \sum_{i=1}^n 1\{x = x_i\} / n$ of the training set. In the following, we explore ways to mitigate the effect of non-uniformity in the context of finetuning pretrained models and by input-space rebalancing.

\subsection{Approach 0: hope that $p$ is close to uniform}

Before considering any solutions, let us quantify how large the DD risk can be when we train our model on a distribution different from the uniform. 
We can derive a simple bound on the difference between expected risks of two distributions by the $\ell_1$ distance $\delta(u, p)$ between their densities: 
\begin{align}
    r_\text{exp}(f; u) - r_\text{exp}(f; p)
    &= \int_{x} u(x) \, l \left( f(x), f^*(x) \right) dx - \int_{x} p(x) \, l \left( f(x), f^*(x) \right) dx \notag \\
    &\hspace{-5mm}= \int_{x} \left( u(x) - p(x) \right) \, l \left( f(x), f^*(x) \right) dx \leq \int_{x} |u(x) - p(x) | dx 
    = \delta(u, p),
\end{align}
where w.l.o.g. we assume that the loss is bounded by $1$, such as in the case of the 0-1 loss function.
By plugging this result within Theorem~\ref{theorem:expected_agnostic_gap} we find that the DD risk will not change significantly if we train and validate our model on some density $p$ that is very similar to $u$. However, we cannot guarantee anything when the densities differ.  

We should also remark that perturbation bounds of the form proposed above might not correlate with empirical observations. Specifically, Ben-David et al.~\citep{ben2006analysis} performed a similar analysis in the context of domain adaptation, showing that $r_\text{exp}(f; q) \leq r_\text{exp}(f; p) + d_H(p, q) + \lambda$, where $d_H$ is the $H$-divergence between the training $p$ and test density $q$ and $\lambda$ measures the closeness of the respective domains. However, the empirical analysis of Vedantam et al.~\citep{vedantam2021empirical} ``indicates that the theory cannot be used
to great effect for predicting generalization in practice''.

\subsection{Approach 1: gentle finetuning}
\label{subsec:gentle_finetuning}

We next focus on finetuning and argue that, independently of how close the training distribution $p$ is to uniform, one may still control the DD risk by controlling the distance between the pretrained model at initialization and the fine-tuned model in the weight space. 

Concretely, we adopt a PAC-Bayesian perspective~\citep{alquier2024user} and suppose that the learner uses the training data $Z$ to determine a distribution $\pi_Z: \c{F} \to [0,1]$ over classifiers $f \in \c{F}$ (equivalently over model weights). We will also assume a prior density $\pi$ that is \textit{unbiased}, meaning that for any $x$ and every $y$, we have $ \pi(f(x)=y) = 1/|\c{Y}|$. In Appendix~\ref{app:pac-bayes} we prove: 
\begin{restatable}{theorem}{pacbayes}
For any unbiased prior $\pi$, the DD risk of a stochastic learner is at most 
\begin{align*}
    r_{\text{dd}}(\pi_Z; \gamma) := \max_{q \in \c{Q}_\gamma} \bb{E}_{f \sim \pi_Z} [r_\text{exp}(f, q)] \leq \bb{E}_{f \sim \pi_Z} [r_\text{exp}(f, p_Z)] + 2 \, \delta(\pi_Z,\pi), 
\end{align*}
where $l$ is a loss such that $\sum_{y \in \c{Y}} l\left(y, y'\right) =  \sum_{y \in \c{Y}} l\left(y, y''\right) \ \forall  y', y'' \in \c{Y}$, such as the zero-one loss. 
\label{cor:pac-bayes}
\end{restatable}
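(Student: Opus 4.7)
The plan is to use the unbiased prior $\pi$ as a pivot: because $\pi$ samples each label with probability $1/|\c{Y}|$ at every $x$, its expected loss is a constant that does not depend on $q$, which lets us compare the risk on any $q \in \c{Q}_\gamma$ with the training risk on $p_Z$ without ever engaging directly with the entropy constraint.

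First I would establish that $\bb{E}_{f \sim \pi}[r_\text{exp}(f, q)]$ is independent of $q$. By Fubini this equals $\bb{E}_{x \sim q}\bigl[ \bb{E}_{f \sim \pi}[l(f(x), f^*(x))]\bigr]$, and unbiasedness gives $\bb{E}_{f \sim \pi}[l(f(x), f^*(x))] = \frac{1}{|\c{Y}|}\sum_{y \in \c{Y}} l(y, f^*(x))$. The symmetry condition $\sum_y l(y, y') = \sum_y l(y, y'')$ then makes this a single constant $c$ independent of $f^*(x)$, so $\bb{E}_{f \sim \pi}[r_\text{exp}(f, q)] = c$ for every distribution $q$ on $\c{X}$.

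Second I would use the $\ell_1$ distance in function-space to relate $\pi_Z$ to $\pi$. Assuming $l$ (and thus $r_\text{exp}$) is bounded in $[0,1]$, for any $q$
\begin{align*}
  \bigl| \bb{E}_{f \sim \pi_Z}[r_\text{exp}(f, q)] - \bb{E}_{f \sim \pi}[r_\text{exp}(f, q)]\bigr| \; \leq \; \int |\pi_Z(f) - \pi(f)|\, df \; = \; \delta(\pi_Z, \pi).
\end{align*}
Applied with an arbitrary $q \in \c{Q}_\gamma$ this gives $\bb{E}_{f \sim \pi_Z}[r_\text{exp}(f, q)] \leq c + \delta(\pi_Z, \pi)$, while applied with $q = p_Z$ it gives $c \leq \bb{E}_{f \sim \pi_Z}[r_\text{exp}(f, p_Z)] + \delta(\pi_Z, \pi)$. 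Chaining the two inequalities yields $\bb{E}_{f \sim \pi_Z}[r_\text{exp}(f, q)] \leq \bb{E}_{f \sim \pi_Z}[r_\text{exp}(f, p_Z)] + 2\,\delta(\pi_Z, \pi)$, and taking the maximum of the left-hand side over $q \in \c{Q}_\gamma$ gives the stated bound.

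The main subtlety, rather than a serious obstacle, is that the whole argument rests on the symmetry condition on the loss: $\sum_y l(y, y') = \sum_y l(y, y'')$ is precisely what turns the prior's expected loss into a constant $c$ independent of the true label, and hence independent of the target function and of $q$. Once that is in hand, neither the entropy lower bound defining $\c{Q}_\gamma$ nor the geometry of $\c{X}$ enters the proof at all — the bound depends only on the empirical training risk and on how far the stochastic learner $\pi_Z$ has drifted from the unbiased prior, which is the characteristic PAC-Bayesian flavour announced in the setup.
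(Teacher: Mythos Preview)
Your proposal is correct and follows essentially the same approach as the paper: both arguments use the unbiased prior as a pivot whose expected risk is a constant $c$ independent of the data distribution (via the balanced-loss condition), and then control the deviation of $\pi_Z$ from $\pi$ by the $\ell_1$ distance, picking up the factor~$2$ from applying this once for $q$ and once for $p_Z$. The paper organizes the same steps as an add-and-subtract decomposition of $r(\pi_Z,q)-r(\pi_Z,p_Z)$ before showing the residual $|r_\text{exp}(\pi,p_Z)-r_\text{exp}(\pi,q)|$ vanishes, but the mathematical content is identical to your pivot-through-$c$ presentation.
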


This result suggests that minimal finetuning on $p_Z$ helps maintain robustness against \ood shifts by not over-fitting the finetuning training set. 
To make the connection with finetuning more concrete, we remark that the unbiased prior considered may correspond to that induced by a pretrained model. The stochasticity of the prior $\pi$ may stem from the initialization of a readout layer or of low-rank adapters~\citep{hulora}, mixout~\citep{leemixout}, or may correspond to a Gaussian whose covariance is given by the weight Hessian as in elastic weight consolidation~\citep{kirkpatrick2017overcoming}. The posterior $\pi_Z$ can be chosen as the ensemble obtained by repeated (full or partial) finetuning, as is common in the domain generalization literature~\citep{wortsman2022model,rame2022diverse,pagliardiniagree,pmlr-v202-rame23a}. Theorem~\ref{cor:pac-bayes} states that the DD risk will be close to the empirical risk if finetuning does not change the (posterior over) weights significantly. Intuitively, the effect of a non-uniformly chosen finetuning set to an \ood set can be expected to be small when the ensemble weights remain close to initialization.  

Note that our results differ from standard PAC-Bayes generalization bounds~\citep{alquier2024user} as we consider the gap w.r.t. $p_Z$ and the worst distribution in $\c{Q}_\gamma$ rather than the training density $p$. However, both theories correlate distance between prior and posterior with better generalization in the \iid and \ood settings, respectively. Both theories are also subject to a trade-off between the ability to fit the training and test data.  
PAC-Bayesian arguments are usually applied when training a network from scratch. However, we argue that it can be more relevant to employ them within the context of finetuning. The emergence of foundation models has shown that it is often possible to learn general representations that can act as strong priors on any downstream task. The more powerful the foundation model is, the better the prior, and the more plausible it becomes to fine-tune a model without deviating much from the pretrained weights. 

From a practical standpoint, one criticism of Theorem~\ref{cor:pac-bayes} is that it is impractical to estimate the $\ell_1$ distance in practice. This may be partially mitigated by relying on the known inequality $\delta(\pi, \pi_Z) \leq \sqrt{ 2 D_{\text{KL}}(\pi, \pi_Z) }$ to bound the distance in terms of the KL divergence. Further, though the theory discusses stochastic predictors, in practice the benefits of gentle finetuning (i.e., models whose weights have not veered far from initialization) may transfer also to deterministic models. This will be tested empirically by using distance to initialization as an early stopping criterion. 
%

\subsection{Approach 2: training and validation set rebalancing}
\label{subsec:rebalancing}

We finally consider the scenario where we use a separate model $w: \c{X} \to \bb{R}_+$ trained on a held-out set drawn from $p$ to estimate weights $w(x)$. 
These weights are now used to rebalance the training and validation set of our classifier leading to the following empirical risk:
\begin{align}
    r_w(f, p_Z) = 
    \frac{1}{n}\sum_{i=1}^n w(x_i) \, l(f(x_i), y_i),  
\end{align}
The choice $w(x) \propto u(x)/p(x)$ is an instance of importance sampling. Importance weights are employed in domain adaptation when the test distribution $q$ was known, whereas we exploit Theorem~\ref{theorem:uniform_is_worst_case_optimal} to re-weight towards the uniform. 
In Appendix~\ref{app:importance_sampling} we apply results from importance sampling~\citep{10.1214/17-AAP1326} to characterize the number of \textit{validation} samples needed to accurately estimate the convergence of $r_{w}(f, p_Z)$ to $r_{\text{exp}}(f,u)$, with $f$ chosen independently of $Z$.

We next account for training set rebalancing by considering instances where $f$ \textit{does} depend on $Z$, whereas $w$ is any general weighting function:
\begin{restatable}{theorem}{rebalancing}
For any Lipschitz continuous loss $l : \c{X} \to [0,1]$ with Lipschitz constant $\lambda$, weighting function $w : \c{X} \to [0, \beta]$ independent of the training set $Z = (x_i,y_i)_{i=1}^n$, and any density $p$, we have with probability at least $1 - \delta$ over the draw of $Z$:
\begin{align*}
    r_\text{exp}(f; u) 
    &\leq r_w(f; p_Z) + \left(\beta \lambda \, \mu  + \|w\|_L\right) \,  \bb{E}_{Z \sim p^n} \left[ W_1(p, p_Z) \right] + 2 \beta \, \sqrt{\frac{2 \ln (1 / \delta)}{n}}  + \delta(u, \hat{u}),
\end{align*}
where the classifier $f:\c{X}\to \c{Y}$ is a function dependent on the training data with Lipschitz constant at most $\mu$, $\delta(u, \hat{u}) = \int_{x} |u(x) - p(x) w(x)| dx$ is the $\ell_1$ distance between the uniform distribution $u$ and the re-weighted training distribution $\hat{u}(x) = p(x) w(x)$, $W_1(p, p_Z)$ is the 1-Wasserstein distance between $p$ and the empirical measure $p_Z$, and $\|w\|_L$ is the Lipschitz constant of $w$.
\label{theorem:rebalancing}
\end{restatable}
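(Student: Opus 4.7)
The plan is to telescope the gap $r_\text{exp}(f;u) - r_w(f;p_Z)$ through the reweighted training density $\hat{u}(x)=p(x)w(x)$ and its population-level weighted risk $r_w(f;p) := \int p(x)w(x)\, l(f(x),f^*(x))\,dx$. Writing
\begin{align*}
r_\text{exp}(f;u) - r_w(f;p_Z) &= \underbrace{[r_\text{exp}(f;u) - r_\text{exp}(f;\hat{u})]}_{(A)} + \underbrace{[r_\text{exp}(f;\hat{u}) - r_w(f;p)]}_{(B)} \\
&\quad + \underbrace{[r_w(f;p) - r_w(f;p_Z)]}_{(C)},
\end{align*}
the first piece $(A)$ is bounded by $\delta(u,\hat{u})$ via the $\ell_1$ perturbation argument already used in Approach~0, since $l \in [0,1]$. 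The middle piece $(B)$ vanishes identically because $r_\text{exp}(f;\hat{u}) = \int p(x)w(x)\, l(f(x),f^*(x))\,dx = r_w(f;p)$ by the change of measure $\hat{u}=pw$. The entire work concentrates on $(C)$.

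For $(C)$, I would combine Kantorovich--Rubinstein duality with a McDiarmid concentration. Define $g_{f}(x) := w(x)\,l(f(x),f^*(x))$. The product rule $\|uv\|_L \leq \|u\|_\infty \|v\|_L + \|u\|_L \|v\|_\infty$, applied with $\|w\|_\infty \leq \beta$, the Lipschitz constant $\|w\|_L$, $\|l\|_\infty \leq 1$, and the $\lambda\mu$-Lipschitz bound on $x \mapsto l(f(x),f^*(x))$ (reading $\lambda$ as the loss's Lipschitz constant in its prediction argument, composed with the $\mu$-Lipschitz $f$), yields $\|g_f\|_L \leq \beta\lambda\mu + \|w\|_L$. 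For any such $f$, Kantorovich--Rubinstein gives
\begin{align*}
r_w(f;p) - r_w(f;p_Z) = \bb{E}_{x \sim p}[g_f(x)] - \bb{E}_{x \sim p_Z}[g_f(x)] \leq (\beta\lambda\mu + \|w\|_L) \cdot W_1(p,p_Z).
\end{align*}

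To accommodate the fact that $f_Z$ depends on $Z$, I would take the supremum over the class $\c{F}_\mu$ of $\mu$-Lipschitz classifiers and work with the uniform deviation $\Psi(Z) := \sup_{f \in \c{F}_\mu}[r_w(f;p) - r_w(f;p_Z)]$. The Wasserstein bound above passes through the supremum, so $\Psi(Z) \leq (\beta\lambda\mu + \|w\|_L) \cdot W_1(p,p_Z)$ and, in expectation, $\bb{E}_Z[\Psi(Z)] \leq (\beta\lambda\mu + \|w\|_L) \cdot \bb{E}_Z[W_1(p,p_Z)]$. Swapping one sample in $Z$ modifies any $r_w(f;p_Z)$ by at most $\beta/n$ uniformly in $f$, so $\Psi$ satisfies McDiarmid's bounded-differences hypothesis and concentrates around its mean at rate $O(\beta\sqrt{\ln(1/\delta)/n})$, matching the claimed $2\beta\sqrt{2\ln(1/\delta)/n}$ up to absolute constants. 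Combining $(A)$, $(B)$, and the two-piece bound on $(C)$ yields the stated inequality.

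The main obstacle is controlling $(C)$ despite the dependence of $f_Z$ on $Z$: a direct Kantorovich--Rubinstein argument applies to a fixed Lipschitz function, not a data-adaptive one. Passing through $\Psi(Z)$ cleanly sidesteps this at the cost of relying on every realization of $f_Z$ genuinely lying in $\c{F}_\mu$, which is precisely the theorem's Lipschitz hypothesis on the learner. A secondary subtlety is the $\lambda\mu$-Lipschitz estimate on $x \mapsto l(f(x),f^*(x))$: this is most transparent either when $f^*$ is also assumed Lipschitz or when the loss depends on $x$ only through $f(x)$; otherwise one must absorb the $f^*$-related fluctuations into $\lambda$ explicitly.
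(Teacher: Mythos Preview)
Your proposal is correct and follows essentially the same route as the paper: the same two-term decomposition (your $(A)+(B)$ is the paper's $T_2$, your $(C)$ is the paper's $T_1$), the same product-rule Lipschitz bound $\beta\lambda\mu + \|w\|_L$ on $x\mapsto w(x)\,l(f(x),y)$, Kantorovich--Rubinstein duality to produce the $W_1(p,p_Z)$ factor, and McDiarmid via bounded differences for the concentration term. The subtlety you flag about $f^*$ is handled in the paper by working on the joint space $z=(x,y)$ rather than through $f^*(x)$, which absorbs the label dependence into the Wasserstein distance over $(x,y)$ pairs.
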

The proof is provided in Appendix~\ref{app:rebalancing}. Similarly to recent generalization arguments~\citep{NEURIPS2021_4607f7ff,loukas2024generalizationsilicoscreening}, the proof relies on Kantorovich-Rubenstein duality to capture the effect of the data distribution $p$ through the Wasserstein distance $W_1(p, p_Z)$ between the empirical and expected measures (subsuming analyses that make manifold assumptions). Akin to previous results, the more concentrated $p$ is, the faster the convergence of the empirical measure $p_Z$ will be, implying better \iid generalization (left-most term on the RHS). In addition, a less expressive classifier (quantified by the Lipschitz constant $\mu$) will require fewer samples to generalize. 

Where Theorem~\ref{theorem:rebalancing} differs from previous arguments~\citep{NEURIPS2021_4607f7ff,loukas2024generalizationsilicoscreening} is that it bounds the rebalancing gap $r_\text{exp}(f; u) - r_w(f; p_Z)$ (relevant to DD risk as per Theorem~\ref{theorem:expected_agnostic_gap}) rather than the typical generalization gap $r_\text{exp}(f; p) - r(f; p_Z)$. By selecting $w(x) \propto 1/p(x)$ we can control the $\ell_1$ distance term $\delta(u, \hat{u})$, but this may be at the expense of worse \iid generalization if the maximum weight value $\beta$ and the Lipschitz constant $\|w\|_L$ is increased as a result. 

From a practical perspective, the theorem suggests two weighting functions that balance in- and out-of-distribution: enforced upper bound $w(x) = \min\{1/p(x), \beta\}$ and enforced smoothness $w(x) = p(x)^\tau$ with $\tau \leq 1$. The two choices discount the effect of $\|w\|_L$ and $\beta$, respectively, by assuming that they do not correspond to the dominant factor in the bound. Note also that, since $u(x)$ is a constant, optimizing the model parameters with gradient-based methods will result to the same solution independently of whether one includes $u(x)$ on the numerator of $w(x)$ or not.

\section{Experiments}

\begin{figure}[t]%
    \centering
    \subfloat[\centering]{{\includegraphics[width=0.42\linewidth]{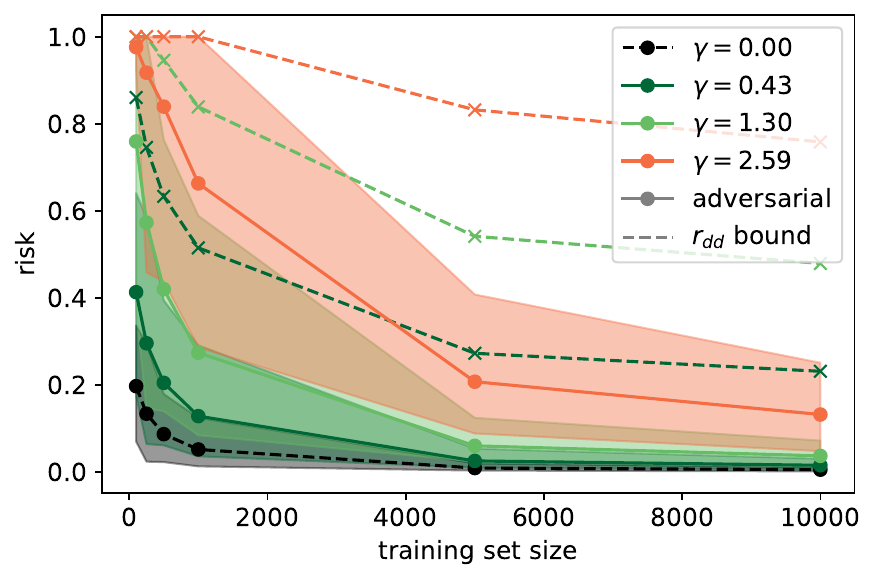} }}%
    \hfill
    \subfloat[\centering]{{ \includegraphics[width=0.54\linewidth]{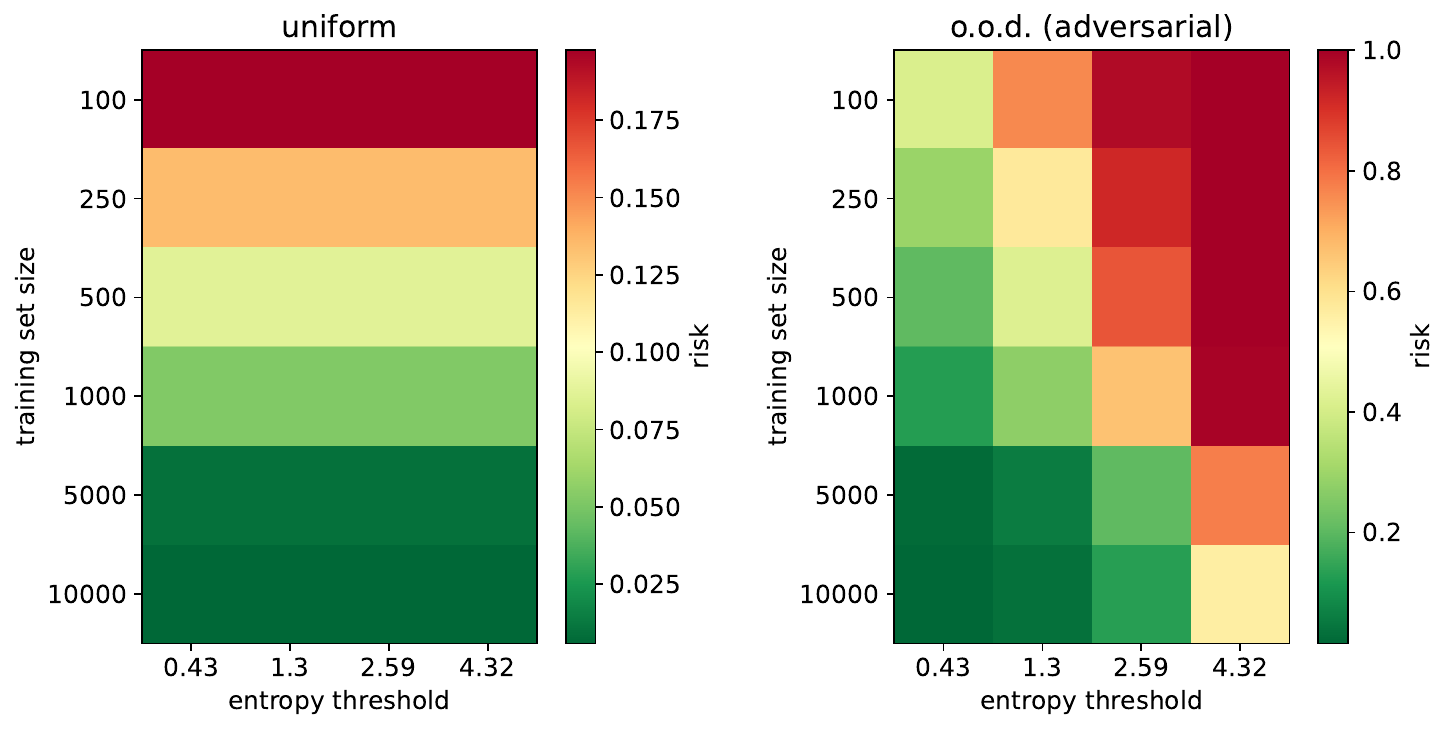} }}%
    \caption{Influence of training set size and entropy gap on DD risk $r_{\text{dd}}(f; \gamma)$ on the mixture of Gaussians task. Here the DD risk is greedily approximated by constructing adversarial test distributions that satisfy the desired entropy bound. The number of training data required to achieve a low DD risk increases sharply with the entropy gap $\gamma$ between the uniform and the test distribution, interpolating between the uniform expected risk and the worst-case risk. The adversarial test distribution risk is always below our $r_{\text{dd}}$ bound from Theorem \ref{theorem:expected_agnostic_gap}.}
    \label{fig:synthetic_uniform}
\end{figure}

The following experiments aim to validate our theory empirically, to examine the effectiveness of the approaches considered in improving \ood generalization, and to identify pitfalls. In summary they demonstrate that, when the training density can be fit aptly, rebalancing consistently improves performance in scenarios with significant covariate shift.

\subsection{Theory validation in a controlled experimental setup}

\begin{figure}[t]
\centering
\includegraphics[trim={10.8cm 0 0 0},clip,width=0.975\linewidth]{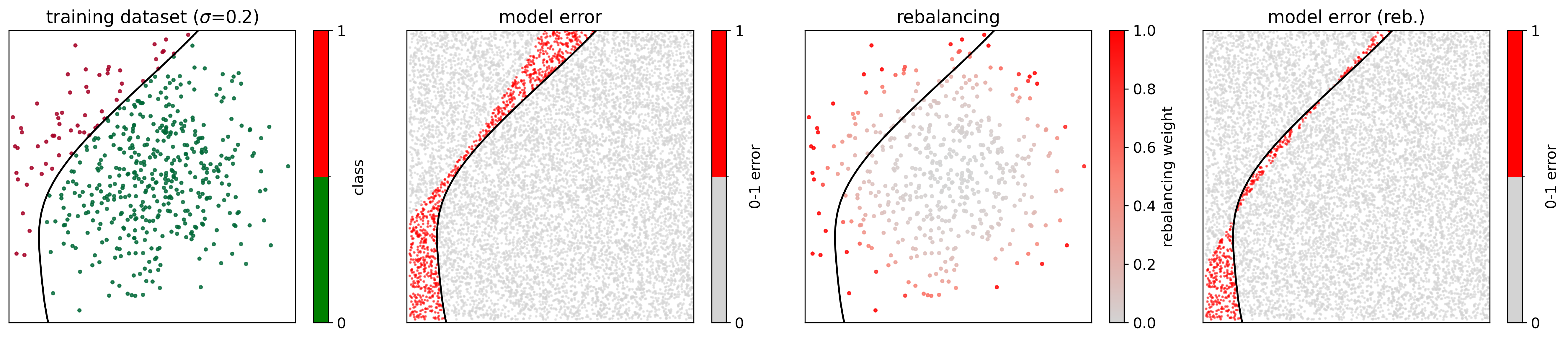}
\caption{Effect of rebalancing on model error. Left: In red, we depict the area over which the model predicts the wrong label when trained without rebalancing. The black line denoted the ground-truth decision boundary. Middle: The plot shows the training set (sampled from a Gaussian distribution) and the importance weights used for rebalancing. These focus the model's attention to more sparsely sampled regions. Right: When trained with rebalancing, the model approximates more closely the ground-truth decision boundary.}
\label{fig:synthetic_rebalancing_example}
\end{figure}

\begin{figure}[t]
\centering
\includegraphics[width=1\linewidth]{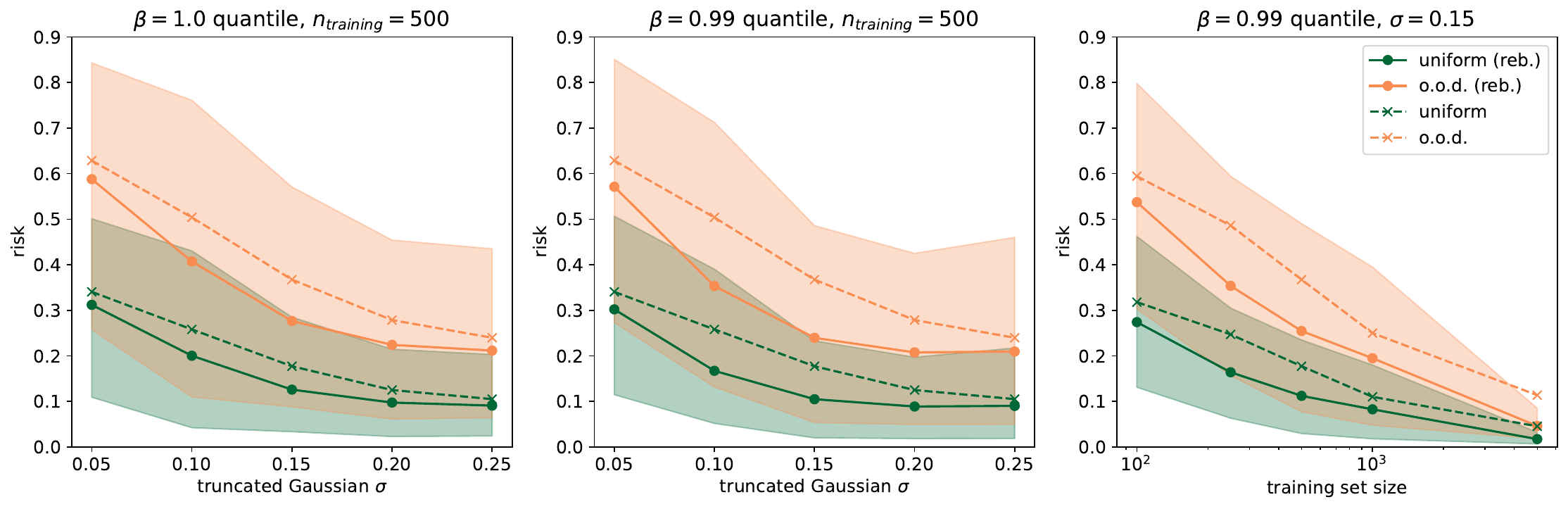}
\caption{The achieved DD risk is smaller for models trained on more uniform training data. The training data is drawn from a truncated Gaussian distribution with increasing standard deviation, such that the sampling becomes gradually more uniform over our sample space. As theorized, the DD risk decays for larger $\sigma$, following the trend of the uniform expected risk.
rebalancing reduces uniform expected and DD risk risk (here for $\gamma = 0.99$). We use a masked auto-regressive flow $\hat{p}$ to fit the density $p$ of the training data and set 
$w(x) \propto \min (1 / \hat{p}(x_i)^\tau, \beta)$, with $\tau = 1$ controlling the smoothness of the weights and $\beta$ set based on a quantile of the training likelihood capping the effect of outliers. Naturally, increasing dataset size reduces DD risk. However, rebalancing remains equally beneficial across all training set sizes tested, showing that increase in data size does not remove the need for uniformity.
}
\label{fig:synthetic_rebalancing}
\end{figure}

We start by testing our theoretical predictions on a mixture of Gaussians classification task, where each mode is assigned a random class label and the goal of the classifier is to classify each point according to its likelihood ratio. We train a multi-layer perceptron on either a uniform or non-uniform training distribution, with a training set size $n$ ranging from 100 to 10,000 examples. To obtain confidence intervals, in the following we sample 35 tasks and repeat the analysis for each one. The controlled setting allows for precise evaluation of entropy, accurate approximation of the worst-case distribution, and exploration of the limits of data.
Further information about the experimental setup can be found in Appendix~\ref{app:mix_of_gaussians}.

We first examine how the DD risk evolves as a function of the training set size when the model is trained on a uniform distribution. We approximate the risk of the worst-case distribution $q^* = \argmax_{q \in \c{Q}_\gamma} r_{\text{exp}}(f; q)$ using a greedy adversarial construction (see Appendix~\ref{app:mix_of_gaussians}). The DD risk is then approximated by the risk of the classifier on the adversarial set. The plotted shaded regions indicate the 5-th and 95-th risk percentiles across repeats. As shown in Figure~\ref{fig:synthetic_uniform}, the empirical (and approximate) DD risk decreases more rapidly for smaller entropy gaps as the number of samples increases. The theoretical bound is non-vacuous and tracks the performance against the \ood test distribution. The remaining gap between theory and practice can be partially explained by the fact that we employed a greedy construction to construct the test distribution that provides a $1-1/e$ approximation of the true optimum giving rise to the DD risk.

Next, we examine the impact of non-uniformity in the training distribution. We select a Gaussian training distribution centered at the center of the domain with an increasing standard deviation, truncated to the unit square. We fix $n = 500$ and vary $\sigma$. As expected, Figure~\ref{fig:synthetic_rebalancing} shows that the DD risk decreases as $\sigma$ increases, indicating that broader coverage of the domain improves generalization. We also investigate how well rebalancing can mitigate the effects of non-uniformity. Figure~\ref{fig:synthetic_rebalancing} shows that by controlling the re-weighting strategy it is possible to improve \ood generalization, thereby partially overcoming the challenges posed by non-uniform training sets. We describe how we implement reweighting in practice in Appendix~\ref{app:rebalancing_setup}.

To gain intuition, we also plot the model output on a specific task instance in Figure~\ref{fig:synthetic_rebalancing} with the true decision boundary shown in black. The left- and right-most sub-figures show the miss-classifier points in red, respectively without and with rebalancing. The training set is Gaussian-distributed and can be seen in the middle sub-figure. This model exhibits poor test error for distribution that assign higher probability of the domain boundaries. rebalancing mitigates this effect leading to a tighter approximation of the true function across the entire domain and thus improved \ood test error.

\subsection{Mitigating covariate shift in practice} 

\begin{table*}[t]
  \caption{\textit{iWildCam}. Macro F1 and average classification accuracy (higher is better). \ood results are on images from wildlife cameras not present in the training set, while i.d. results are on images from the cameras in the training set, but taken on different days. Parentheses show standard deviation across 3 replicates. }\label{tab:results_iwildcam}
  \centering
  \resizebox{\textwidth}{!}{\begin{tabular} {l c c c c c c c c}
  \toprule
   & \multicolumn{2}{c}{Validation (i.d.)} & \multicolumn{2}{c}{Validation (\ood)} & \multicolumn{2}{c}{Test (i.d.)} & \multicolumn{2}{c}{Test (\ood)} \\
  Algorithm & Macro F1 & Avg acc & Macro F1 & Avg acc &  Macro F1 & Avg acc & Macro F1 & Avg acc \\
  \midrule
    ERM & 48.8 (2.5) & 82.5 (0.8) & 37.4 (1.7) & 62.7 (2.4) & 47.0 (1.4) & 75.7 (0.3) & 31.0 (1.3) & 71.6 (2.5) \\
    CORAL & 46.7 (2.8) & 81.8 (0.4) & 37.0 (1.2) & 60.3 (2.8) & 43.5 (3.5) & 73.7 (0.4) & 32.8 (0.1) & 73.3 (4.3)  \\
    IRM & 24.4 (8.4) & 66.9 (9.4) & 20.2 (7.6) & 47.2 (9.8) & 22.4 (7.7) & 59.9 (8.1) & 15.1 (4.9) & 59.8 (3.7) \\
    Group DRO & 42.3 (2.1) & 79.3 (3.9) & 26.3 (0.2) & 60.0 (0.7) & 37.5 (1.7) & 71.6 (2.7) & 23.9 (2.1) & 72.7 (2.0) \\
    Label reweighted & 42.5 (0.5) & 77.5 (1.6) & 30.9 (0.3) & 57.8 (2.8) & 42.2 (1.4) & 70.8 (1.5) & 26.2 (1.4) & 68.8 (1.6) \\
    \midrule
Rebalancing & 49.1 (1.5) & 82.8 (1.7) & 38.8 (0.7) & 62.7 (0.6) & 48.1 (3.1) & 76.1 (1.0) & 31.5 (1.8) & 76.1 (1.0) \\
Rebalancing (PCA-256) & 51.4 (1.8) & 83.9 (1.3) & 39.7 (0.6) & 65.4 (0.5) & 47.0 (3.1) & 76.7 (1.2) & 33.5 (1.0) & 76.7 (1.2) \\
Rebalancing (label cond.) & 54.0 (1.5) & 84.5 (1.4) & 39.5 (0.5) & 66.7 (1.5) & 50.1 (1.8) & 77.5 (1.1) & 34.9 (1.1) & 77.5 (1.1) \\
Rebalancing (PCA, label cond.) & 53.9 (0.7) & 84.2 (0.4) & 40.2 (0.8) & 66.5 (1.5) & 49.8 (1.4) & 77.0 (0.3) & 35.5 (0.8) & 77.0 (0.3) \\
  \bottomrule
  \end{tabular}}
\end{table*}

\begin{table*}[t]
\caption{\textit{PovertyMap}. Pearson correlation (higher is better) on in-distribution and out-of-distribution (unseen countries) held-out sets, including results on the rural subpopulations.
All results are averaged over 5 different \ood country folds, with standard deviations across different folds in parentheses.}
  \centering
\resizebox{1\textwidth}{!}{
\begin{tabular} {l c c c c c c c c}
\toprule
 & \multicolumn{2}{c}{Validation (i.d.)} & \multicolumn{2}{c}{Validation (\ood)} & \multicolumn{2}{c}{Test (i.d.)} & \multicolumn{2}{c}{Test (\ood)} \\
Algorithm & Overall & Worst & Overall & Worst & Overall & Worst & Overall & Worst \\
\midrule
ERM & 0.82 (0.02) & 0.58 (0.07) & 0.80 (0.04) & 0.51 (0.06) & 0.82 (0.03) & 0.57 (0.07) & 0.78 (0.04) & 0.45 (0.06) \\
CORAL & 0.82 (0.00) & 0.59 (0.04) & 0.80 (0.04) & 0.52 (0.06) & 0.83 (0.01) & 0.59 (0.03) & 0.78 (0.05) & 0.44 (0.06) \\
IRM & 0.82 (0.02) & 0.57 (0.06) & 0.81 (0.03) & 0.53 (0.05) & 0.82 (0.02) & 0.57 (0.08) & 0.77 (0.05) & 0.43 (0.07) \\
Group DRO & 0.78 (0.03) & 0.49 (0.08) & 0.78 (0.05) & 0.46 (0.04) & 0.80 (0.03) & 0.54 (0.11) & 0.75 (0.07) & 0.39 (0.06) \\
\midrule
Rebalancing & 0.83 (0.01) & 0.62 (0.02) & 0.80 (0.03) & 0.53 (0.04) & 0.84 (0.02) & 0.63 (0.04) & 0.75 (0.07) & 0.44 (0.06) \\
Rebalancing (UMAP-64) & 0.85 (0.01) & 0.66 (0.03) & 0.80 (0.03) & 0.53 (0.04 & 0.85 (0.01) & 0.65 (0.04) & 0.78 (0.04) & 0.47 (0.10) \\
\bottomrule
\end{tabular}
}
\label{table:poverty-map}
\end{table*}

We proceed to evaluate generalization on various classification tasks involving \ood shifts from popular benchmarks~\citep{koh2021wilds, gulrajani2021in}. We select tasks focusing primarily on those that involve covariate shift rather than concept or label drift. We compare to vanilla empirical risk minimization (ERM) and the baselines reported by the original studies.

To examine the effect of rebalancing, we require a density estimator. After experimentation in the mixture of Gaussians task, we settled in favor of masked auto-regressive flow (MAF)~\citep{NIPS2017_6c1da886} fit on the embeddings of the pretrained model that is then fine-tuned to solve the task at hand. Further details can be found in Appendix~\ref{app:rebalancing_setup}. 

Tables~\ref{tab:results_iwildcam},~\ref{table:poverty-map}, and~\ref{tab:colormnist} present the results on the iWildCam~\citep{beery2021iwildcam}, PovertyMap~\citep{koh2021wilds}, and ColorMNIST~\citep{arjovsky2019invariant} tasks, respectively. Our theory motivates rebalancing as a strategy for improving performance under the worst-case covariate shifts and, indeed, we can observe higher gains for the worst-group in ColorMNIST and PovertyMap (iWildCam doesn't come with such a split). 

Interestingly, in both PovertyMap and iWildCam, rebalancing yields performance improvements also in the in-distribution (i.d.) sets. Closer inspection reveals that the i.d.\xspace validation and test sets for iWildCam were not sampled \iid. This can be also seen in Figure~\ref{fig:densities} which depicts the estimated log-likelihoods distributions for the training, i.d., and \ood sets. These plots confirm a gradual increase in covariate shift from i.d.\xspace to \ood validation sets. On the other hand, inspection of the PovertyMap \ood set likelihoods reveals a noticeable domain shift for a large fraction of the set, which explains why rebalancing is less effective in this instance. 

While our base approach often results in improvements, we found that better results could be achieved by introducing a dimensionality reduction step prior to density estimation or by fitting a label-conditioned density on the training set. Both are described in Appendix~\ref{app:rebalancing_setup}. The best configuration was selected through ablation over relevant hyperparameters. However, the larger number of moving parts reveals the brittleness of our MAF density estimator, which influences the gains achieved. This issue is further discussed in Section~\ref{subsec:failures}. 

We additionally explore the effect of gentle finetuning by modifying the model selection process in ColorMNIST. We focus on the -90\% group which features the largest covariate shift. In DomainBed it is standard practice to use a held-out subset of the training set for early stopping. As shown in Table~\ref{tab:colormnist}, removing early stopping, but still using the held out set for hyperparameter selection, slightly improves the performance of ERM, indicating a substantial mismatch between the training and worst-group distributions. Performance improves further when we use the weight distance to initialization (WDL2) instead of validation error to select model configurations, as motivated by our gentle finetuning analysis in Section~\ref{subsec:gentle_finetuning}, also with no early stopping. Further gains are observed when we add rebalancing of the training set while also using the WDL2 model selection. The best result was obtained by combining the above with dimensionality reduction prior to fitting the density estimator.  

WDL2 was used only in ColorMNIST because in WILDS benchmarks it is a convention to use an appropriate \ood validation set for model selection. When such a set is available, it presents the preferred choice.

\begin{table*}[t]
\caption{ColoredMNIST. Binary classification accuracy (higher is better). Our methods bring benefits w.r.t. model performance on the group (-90\%) that entails the largest covariate shift. Since model selection strategies are crucial for this task, in addition to the official implementation that uses validation-based early stopping (first part of the table), we also test the effect of the following model selection strategies: WDL2 entails using the weight distance to initialization for model selection, motivated by our gentle finetuning argument.}
\centering
\resizebox{.82\textwidth}{!}{%
\begin{tabular}{lcccc}
\toprule
\textbf{Algorithm}                       & \textbf{+90\%}                           & \textbf{+80\%}                           & \textbf{-90\%}                           & \textbf{Avg}                             \\
\midrule
ERM                  & 71.7 $\pm$ 0.1       & 72.9 $\pm$ 0.2       & 10.0 $\pm$ 0.1       & 51.5                 \\
IRM                  & 72.5 $\pm$ 0.1       & 73.3 $\pm$ 0.5       & 10.2 $\pm$ 0.3       & 52.0                 \\
GroupDRO             & 73.1 $\pm$ 0.3       & 73.2 $\pm$ 0.2       & 10.0 $\pm$ 0.2       & 52.1                 \\
Mixup                & 72.7 $\pm$ 0.4       & 73.4 $\pm$ 0.1       & 10.1 $\pm$ 0.1       & 52.1                 \\
MLDG                 & 71.5 $\pm$ 0.2       & 73.1 $\pm$ 0.2       & 9.8 $\pm$ 0.1        & 51.5                 \\
CORAL                & 71.6 $\pm$ 0.3       & 73.1 $\pm$ 0.1       & 9.9 $\pm$ 0.1        & 51.5                 \\
MMD                  & 71.4 $\pm$ 0.3       & 73.1 $\pm$ 0.2       & 9.9 $\pm$ 0.3        & 51.5                 \\
DANN                 & 71.4 $\pm$ 0.9       & 73.1 $\pm$ 0.1       & 10.0 $\pm$ 0.0       & 51.5                 \\
CDANN                & 72.0 $\pm$ 0.2       & 73.0 $\pm$ 0.2       & 10.2 $\pm$ 0.1       & 51.7                 \\
MTL                  & 70.9 $\pm$ 0.2       & 72.8 $\pm$ 0.3       & 10.5 $\pm$ 0.1       & 51.4                 \\
SagNet               & 71.8 $\pm$ 0.2       & 73.0 $\pm$ 0.2       & 10.3 $\pm$ 0.0       & 51.7                 \\
ARM                  & 82.0 $\pm$ 0.5       & 76.5 $\pm$ 0.3       & 10.2 $\pm$ 0.0       & 56.2                 \\
VREx                 & 72.4 $\pm$ 0.3       & 72.9 $\pm$ 0.4       & 10.2 $\pm$ 0.0       & 51.8                 \\
RSC                  & 71.9 $\pm$ 0.3       & 73.1 $\pm$ 0.2       & 10.0 $\pm$ 0.2       & 51.7                 \\
\midrule
ERM (no early stopping)                     & 71.1 $\pm$ 0.4                           & 72.8 $\pm$ 0.2                           & 10.2 $\pm$ 0.2                           & 51.4                                     \\
ERM (WDL2)                              & 66.9 $\pm$ 0.8                           & 71.4 $\pm$ 0.8                           & 11.0 $\pm$ 0.5                           & 49.8                                     \\
\midrule
Rebalancing (no early stopping)              & 71.6 $\pm$ 0.3           & 72.6 $\pm$ 0.4           & 10.1 $\pm$ 0.2           & 51.4                     \\
Rebalancing (UMAP-8, label cond., no early stop.)      & 70.4 $\pm$ 0.3           & 73.6 $\pm$ 0.3           & 10.8 $\pm$ 0.3           & 51.6                     \\
Rebalancing (WDL2)                           & 70.7 $\pm$ 0.4                           & 70.9 $\pm$ 2.0                           & 12.0 $\pm$ 1.0                           & 51.2                                     \\
Rebalancing (UMAP-8, label cond., WDL2)  & 69.5 $\pm$ 1.0                           & 72.7 $\pm$ 0.7                           & 37.0 $\pm$ 10.7                          & 59.7                                     \\
\bottomrule
\end{tabular}}
\label{tab:colormnist}
\end{table*}

\begin{figure}%
    \centering
    \subfloat[iWildCam\centering]{{\includegraphics[width=0.32\linewidth]{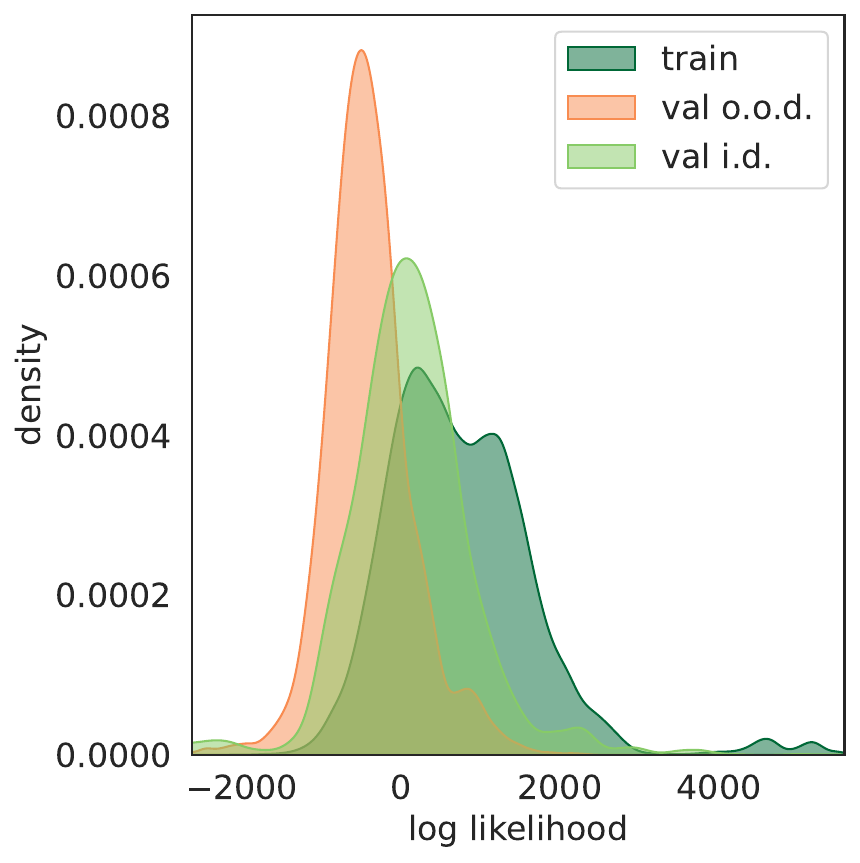}}}%
    \hfill
    \subfloat[PovertyMap\centering]{{ \includegraphics[width=0.32\linewidth]{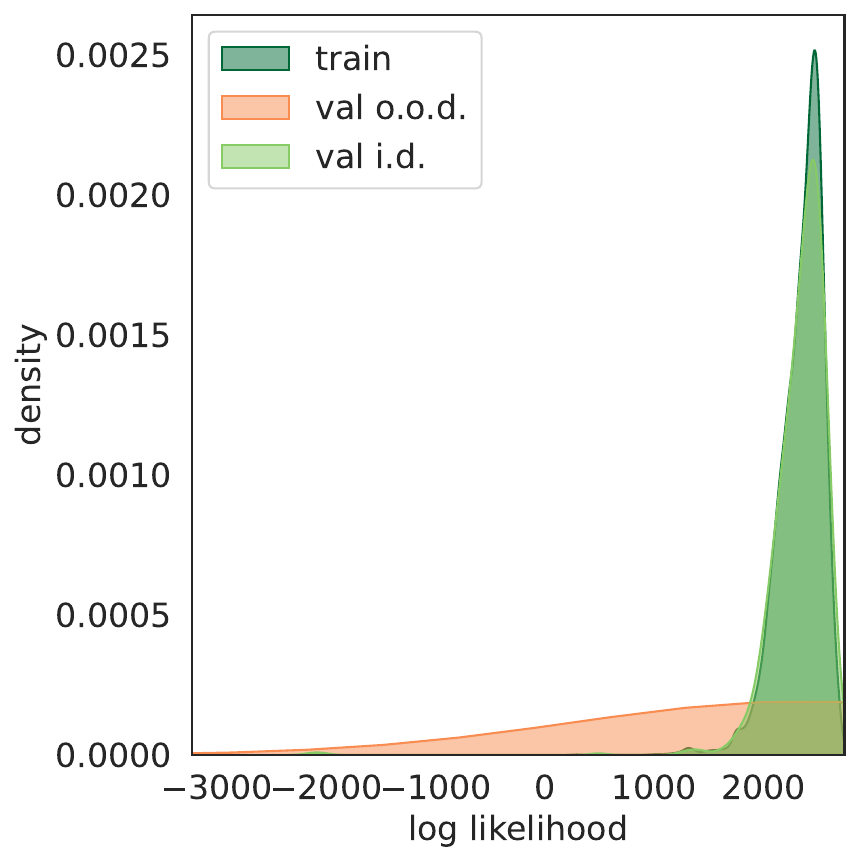} }}%
    \hfill
    \subfloat[ColorMNIST\centering]{{ \includegraphics[width=0.32\linewidth]{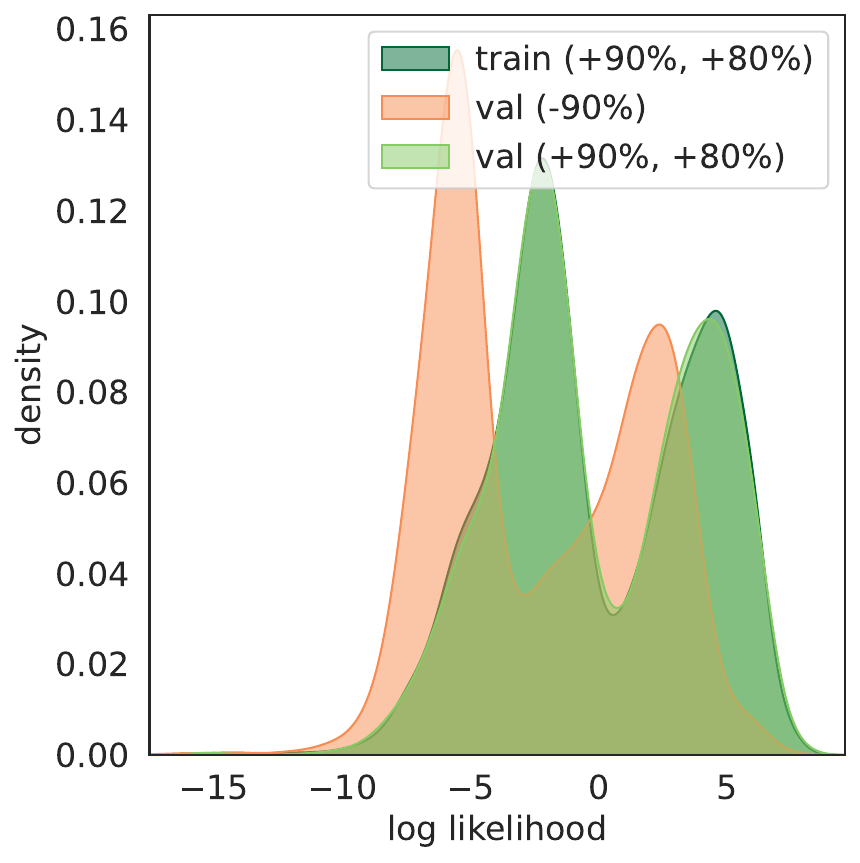} }}%
    \caption{Log-likelihoods used for training set rebalancing as well as i.d.\xspace and \ood set log-likelihood distributions. iWildCam and ColorMNIST feature covariate shift, as the density support is largely the same across all sets. The \ood PovertyMap set contains a notable domain shift.}\vspace{-5mm}
    \label{fig:densities}
\end{figure}

\subsection{Pitfalls} 
\label{subsec:failures}


A key prerequisite for rebalancing to work is that we can successfully fit a density over the training set to derive importance weights. Our theory suggests that rendering the training set more uniform can lead to models that are more robust to high entropy test distributions. 

We already saw in the experiments above that introducing carefully tuned dimensionality reduction or label conditioning when fitting the density could yield significant benefits. We also occasionally encountered issues with nans appearing when attempting to fit some densities (e.g., with the label cond. variant in the PovertyMap task).
Unfortunately, we also encountered datasets for which the density fit was so poor that the aforementioned modifications did not suffice to improve performance. In Appendix~\ref{app:poor_fit_results} we take a deeper dive into these failures, showing how the failure of the density estimator impacts test performance. 

Finally, we re-iterate that this work focuses on test distributions supported in the same domain as the training distribution. Many tasks in the popular DomainBed and WILDS benchmarks contain significant domain and label shifts which require a different approach.   

\section{Conclusion}

As machine learning progresses toward larger-scale datasets and the development of foundation models, it becomes increasingly important to move beyond traditional \iid guarantees and to consider worst-case scenarios in \ood generalization. 

In this spirit, our work introduces a novel perspective that prioritizes minimizing worst-case error across diverse distributions. We have shown that training on uniformly distributed data offers robust guarantees, making it a powerful strategy as models scale in complexity and scope. Even when uniformity cannot be achieved, we find that rebalancing strategies can provide practical avenues to enhance model resilience. 

Further empirical work focusing on obtaining more robust density estimates as well as investigation of gentle finetuning in the context of ensembles and foundation model training could bring additional benefits. Other potential avenues to mitigate training data non-uniformity could include adding noise to the input data~\citep{6796505} and mixup~\citep{zhang2017mixup}. 

Overall, the shift in focus from average-case to worst-case generalization presents a compelling framework for the next generation of machine learning models, particularly as they are trained on increasing larger datasets and deployed in increasingly unpredictable environments.

\bibliography{ref}
\bibliographystyle{unsrt}

\clearpage
\appendix

\section{Related work}
\label{app:related_work}

Out-of-distribution generalization is a central challenge in machine learning, where models trained on a specific data distribution are required to perform well on unseen distributions that may differ significantly. For an in-depth survey of the flurry of work on \ood generalization, we refer to the following surveys~\citep{liu2023outofdistributiongeneralizationsurvey,zhou2022domain, wang2022generalizing}. In the following, we discuss the various approaches that have been developed to this challenge, emphasizing their similarities and differences with this work.

\textbf{Domain generalization.} Domain generalization and invariant risk minimization aim to enhance a model's ability to generalize across different environments or domains. These methods define a set of environments from which data is gathered and seek to ensure that the model's outputs are invariant (i.e., indistinguishable) with respect to the environment from which the data originated~\citep{bengio2013representation,peters2016causal,arjovsky2019invariant,rosenfeldrisks, koyama2020invariance}. The ultimate goal is to capture features that are discriminative across domains while ignoring spurious correlations.
Similarly to these methods, our work acknowledges potential shifts between training and test distributions and aims to promote robust generalization. Unlike domain generalization approaches, we do not assume that the data comes from multiple pre-defined environments. Recent advances, particularly those that combine finetuning of pretrained models with ensembling strategies, are relevant to our approach as they have shown promise in enhancing generalization across varied domains~\citep{gulrajani2021in}. Ensemble-based methods such as model soups~\citep{wortsman2022model}, DiWA~\citep{rame2022diverse}, agree-to-disagree~\citep{pagliardiniagree}, and model Ratatouille~\citep{pmlr-v202-rame23a} have further extended these concepts, aligning closely with our objectives.

\textbf{Robustness to uncertainty and perturbation.} Robust optimization (RO) is a well-established field in optimization that deals with uncertainty in model parameters or data~\citep{ben2009robust}. In the context of machine learning, RO has been applied to scenarios where the test distribution is assumed to be within a set of known plausible distributions, with the goal of minimizing the worst-case loss over this set~\citep{caramanis2011robust,singla2020survey,zhang2022machine}. Distributionally robust optimization (DRO)~\citep{rahimian2019distributionally} extends this concept by considering a set of distributions over the unknown data, usually inferred through a prior on the data, such as distributions that are close to the training distribution by some distance measure such as the Wasserstein distance~\citep{kuhn2019wasserstein}.
Our approach shares similarities with DRO in that the DD risk we aim to minimize can be seen as a worst-case risk over all possible distributions that meet certain entropy criteria. However, our work diverges from traditional RO and DRO frameworks in significant ways. RO typically focuses on the optimization aspect, often assuming a convex cost function and providing theoretical justification for various regularizers such as Tikhonov and Lasso~\citep{caramanis2011robust}. These approaches are particularly concerned with uncertainty within a bounded region near the training data, such as noisy or partial inputs and labels~\citep{singla2020survey,zhang2022machine}. In contrast, our focus is on scenarios where the training data is neither noisy nor partial but where the test distribution can change almost arbitrarily within the support of the training distribution, provided that its entropy is not too small.
Within the deep learning community, significant efforts have also been made to train models that are robust to small input perturbations, referred to as adversarial examples~\citep{goodfellow2015explaining,madry2018towards,alayrac2019labels,bai2021recent}. While these methods have influenced our understanding of robustness, our work is distinct in that we address broader shifts in the test distribution rather than specific adversarial perturbations.

\textbf{Domain adaptation and covariate shift.} 
Domain adaptation is a sub-area of transfer learning~\citep{pan2009survey} focused on transferring knowledge from a source domain to a target domain where the data distribution differs~\citep{farahani2021brief}. This is crucial when a model trained on one domain is expected to perform well on a new domain. Domain adaptation falls into three types of domain shifts: covariate, concept, and label shift. 
Our work is most closely related to the covariate shift scenario, in which the distribution of input features changes between the training and test phases. 
Similarly to some domain adaptation methods that address covariate shift, we also consider re-weighting strategies to account for differences between training and test distributions~\citep{shimodaira2000improving, huang2006correcting} (other approaches consider ensemble disagreement~\citep{jiangassessing, kirschnote}, model confidence~\citep{garg2022leveraging}, and neighborhood invariance~\citep{ngpredicting} on the unlabeled data). A key difference is that we don't assume that we have access to unlabeled target domain data but instead consider re-weighing to uniformly rebalance the training set. This choice motivated the proven optimality of the uniform distribution when considering the worst-case scenario across all possible distributions that are constrained by a given entropy threshold. This broader DD framework enables robust generalization across diverse scenarios, without relying on prior knowledge of specific target distributions or adaptations tailored to known shifts.

\textbf{Active learning.} Active Learning (AL) aims to efficiently train models by selecting the most informative data points for labeling, thereby reducing the amount of labeled data needed to achieve high performance~\citep{cohn1996active,settles2009active}. Within AL, entropy maximization is a common practice, where the focus is on maximizing the entropy of the predictive distribution $p(y|x)$ to identify the most uncertain and thus informative data points. Other acquisition functions include Variation Ratios (which select data points based on the disagreement among multiple model predictions), mean standard deviation~\citep{kendall2017bayesian} provide alternative ways to measure uncertainty. Additionally, methods like mutual information between predictions and model posterior have been used to target data points that most influence the model's posterior distribution~\citep{houlsby2011bayesian}. The use of density estimation to bias the sampling towards low-density areas was also considered in~\citep{zhu2008active}. Therein, they define the density$\times$entropy measure which combines $H(y|x)$ and $p(x)$. For a recent overview of these sampling strategies in computer vision, refer to Gal et al.~\citep{gal2017deep}.  Despite the overlap in techniques, our work focuses on \ood generalization rather than iterative sample acquisition: while AL aims to improve model performance by selecting specific data points during training, our study seeks to optimize generalization across all possible distributions within a domain, offering a different perspective on managing uncertainty.

\textbf{Rebalancing.}
The impact of rebalancing on worst-group performance has been previously observed by Idrisi et al.~\citep{pmlr-v177-idrissi22a} and Chaudhuri et al.~\citep{10.5555/3618408.3618574}, where the former empirically demonstrated that both label and group-rebalancing strategies can be beneficial and the latter analyzed the effect of imbalanced classes on classifier margins through the lens of extreme-value theory. While we share the view that rebalancing can be beneficial, our work approaches the problem from a different perspective. Practically, we emphasize input data rebalancing rather than focusing on labels or groups, thereby eliminating the need for labeled data. Theoretically, we introduce a novel framework for understanding \ood generalization that provides justification for rebalancing. We also derive a generalization bound for rebalancing that reveals the trade-off between \iid and \ood generalization.

\section{Additional experimental details}
\label{app:experimental_details}

\subsection{Rebalancing in practice}\label{app:rebalancing_setup}

The aim of rebalancing is to fit a density function to the training examples $p(x)$ such that, after rebalancing, the training examples with weights $w(x) \propto 1 / p(x)^\tau$ resemble a uniform distribution. While temperature $\tau$ can be tuned as a hyperparameter, we kept $\tau = 1.0$ for simplicity. As per Theorem \ref{theorem:rebalancing}, we cap the weights $w(x) \propto \min (1 / \hat{p}(x_i)^\tau, \beta)$. In practice this is used as a way to regularize the distribution and avoid oversampling outliers.
From synthetic experiments we found $\beta = 0.99$ quantile to be a robust choice and used it in all further real-world experiments.

We use a masked autoregresive flow (MAF) \citep{papamakarios2017masked} to fit a density to the training set embeddings as we found it perform better than alternatives in preliminary experiments. In all setups we use a standard
Gaussian as a base density. For real-world experiments we used MAF with 10 autoregressive layers and a 2 layer MLP with a hidden dimension of 256 for each of them. For synthetic experiments we used MAF with 5 autoregresive layers and a 2 layer MLP with a hidden dimension of 64 for each of them. In all cases, when training MAF we hold out 10\% random subset of the training data for early stopping, with a patience of 10 epochs. Checkpoint with the best held-out set likelihood is used. MAF is trained with learning rate of $3e-4$ and the Adam~\citep{kingma2014adam} optimizer.

When using label conditioning, we use the flow to estimate conditional density $p(x) = p_{\text{MAF}}(x | y) p(y)$, where $p(y)$ is computed from label frequency in the training set. Further, in this case we ensure that final weights upsample minority label samples by scaling weights to inverse label frequency $w' = w / p(y)$.

To achieve good quality rebalancing we need the sample weights to capture the data landscape aspects relevant to the given problem. As we focus on generalization tasks where it is standard to use pre-trained models \citep{koh2021wilds} as a starting point, we aim to fit a density on the embeddings produced by the pre-trained backbone model. For real world experiments we use the hyperparameters and experimental setup proposed for ERM in the respective original papers \citep{koh2021wilds, gulrajani2021in}.

All WILDS \citep{koh2021wilds} datasets considered use a pre-trained model that is finetuned with a new prediction head. We use that pre-trained model as a featurizer to produce the training set embeddings. In all cases the pre-trained models used already produce one embedding vector per training sample, except CodeGPT \citep{lu2021codexglue} used for Py150, for which we apply mean pooling over the sequence to produce a single embedding vector to use for density estimation.

For ColorMNIST traditionally no pre-trained backbone is used \citep{gulrajani2021in}. Thus, we use a ResNet-50 model pre-trained on ImageNet to build the embeddings for density estimation, but otherwise train the standard model architecture, with exactly the same experimental setup as proposed by Lopez-Paz et al.~\citep{gulrajani2021in}.

To combat the curse of dimentionality and to potentially fit smoother densities we also explore dimensionality reduction. In all real-world datasets we perform a small grid search over transforming embeddings using UMAP \citep{mcinnes2018umap} to 8 or 64 dimensions or using PCA to 256 dimensions before fitting the flow model. We use the validation loss to select the dimensionality reduction technique.
We selected on these grid-search hyperparameters from preliminary experiments where we found that at lower target dimensions UMAP performs much better than PCA in our setup, while for larger target dimensions PCA is at least as good. A more exhaustive hyperparameter search was avoided to save computational resources. To facilitate the flow training, we normalize the embeddings. We grid search over two options: normalizing each embedding dimension to unit variance and zero mean or normalizing all embeddings by the maximum vector length.

For synthetic experiments we directly fit the flow to the data.

\subsection{Mixture of Gaussians}
\label{app:mix_of_gaussians}

\begin{figure}[ht]
\centering
\includegraphics[width=.9\linewidth]{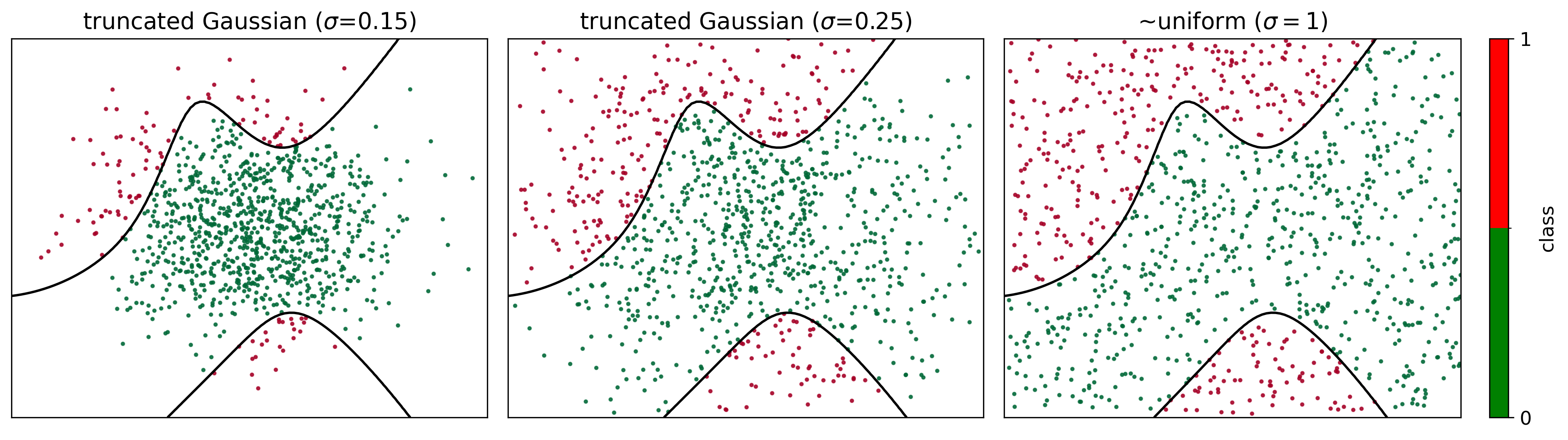}
\caption{Different sampling strategies for our synthetic dataset. Points are sampled either uniformly or using a truncated Gaussian, with varying standard deviations. To label the samples, we use four randomly placed univariate Gaussian centroids and for each point $x$ assign the label $y$ (red or green) of the Gaussian with the highest likelihood. Resulting decision boundary is in black.}
\label{fig:synthetic_example}
\end{figure}

\begin{figure}[ht]
\centering
\includegraphics[width=1.0\linewidth]{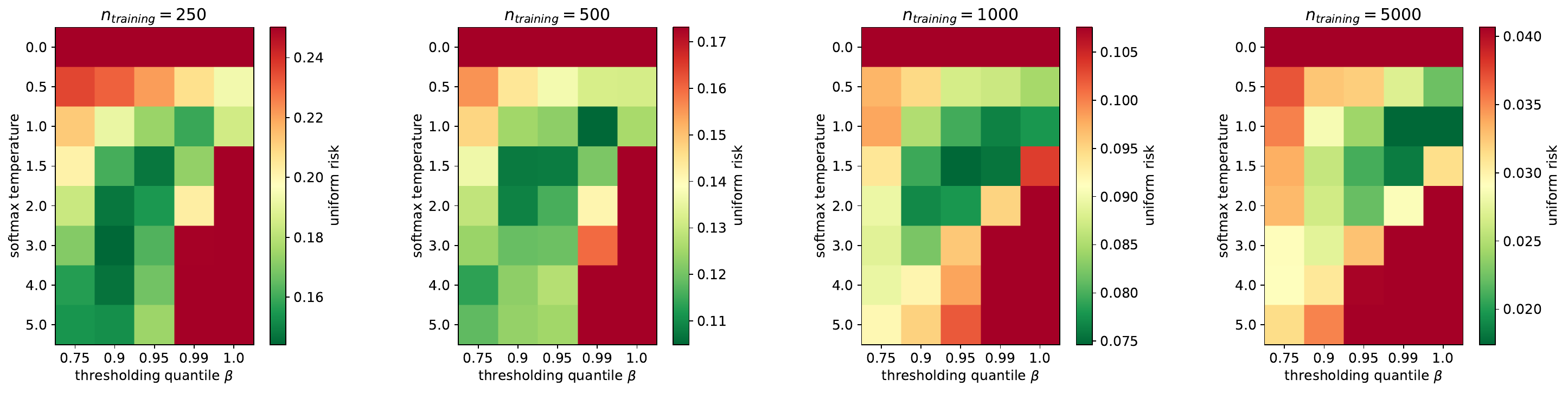}
\caption{Effect of different threshold $\beta$ and temperature $\tau$ values when using sample rebalancing for training datasets with varying levels of uniformity (truncated gaussian $\sigma$. For visualization, error values are clipped from above to the value achieved without rebalancing.}
\label{fig:risk_vs_beta_temp_sizes}
\end{figure}

\begin{figure}[ht]
\centering
\includegraphics[width=1.0\linewidth]{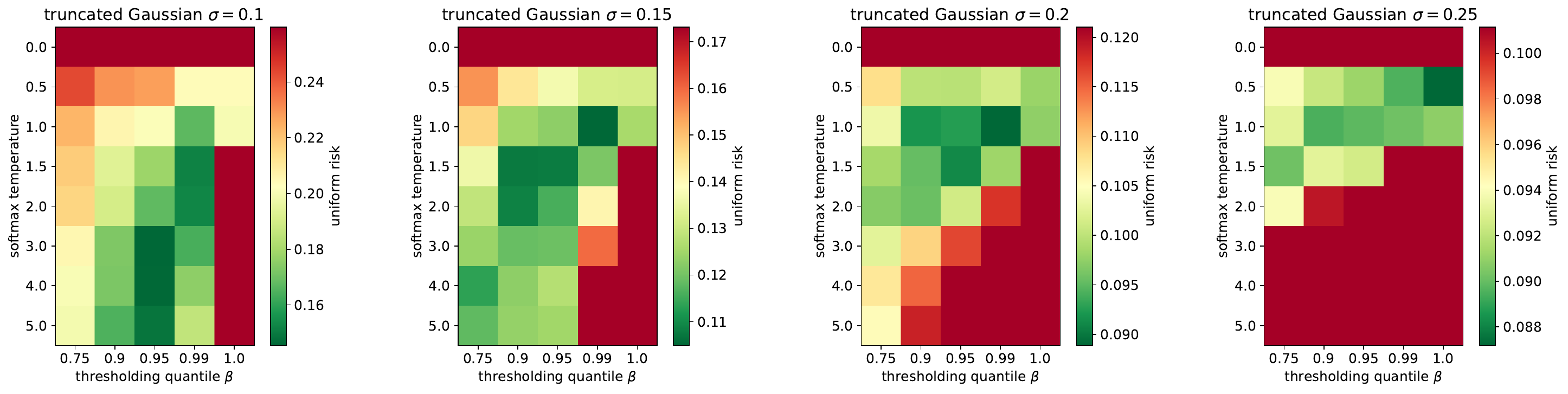}
\caption{
Effect of different threshold $\beta$ and temperature $\tau$ values on expected when using sample rebalancing across training sets with different non-uniformity (captured by the standard deviation $\sigma$). For visualization, error values are clipped from above to the value achieved without rebalancing.}
\label{fig:risk_vs_beta_temp_stds}
\end{figure}

We consider four isotropic Gaussians, where the means are uniformly selected within the unit square $[0,1]^2$. Two of these Gaussians represent the positive class, and the other half represent the negative class. A point in $[0,1]^2$ is labeled as positive if the likelihood ratio of positive to negative mixtures exceeds one. We train a multi-layer perceptron on either a uniform or non-uniform training distribution, with a training set size $n$ ranging from 100 to 10,000 examples. To obtain confidence intervals, in the following we sample 35 tasks and repeat the analysis for each one. Illustrative examples of the task are provided in Figure~\ref{fig:synthetic_example}. 

A key challenge we encountered was the reliable estimation of entropy, as the entropy estimators we tried were biased, a common issue in entropy estimation. To address this, we partition the space into 100×100 bins $\c{B}$, sample a held-out set of $m=10,000$ examples, and estimate the discrete entropy using the formula $H(p) \approx \sum_{b \in \c{B}} \hat{p}_{b}(x) \log(1/\hat{p}_{b}(x))$, where $\hat{p}_{b}(x) = \sum_{x} 1\{ x \in b \} / m$. 

We select the test distribution using a greedy adversarial construction: 
first, we sample 10 000 points uniformly, then starting with all test points that the model mislabels, an adversarial set is iteratively expanded by adding the point that maximizes entropy at each step. Whereas set selection for entropy maximization is NP-hard, this procedure provides a $1-1/e$ approximation of the true optimum~\citep{ko1995exact,krause2012near,sharma2015greedy}.

\subsection{Results on datasets with poor density fit}\label{app:poor_fit_results}

As discussed in the main body, we do not hope to achieve great results when using sample reweighting if our density fit is poor or the test set features a domain shift. In this section we show the remaining WILDS \citep{koh2021wilds} datasets we have considered that rely on pre-trained models. As can be seen in Figure \ref{fig:bad_density_fits_wilds} density fit quality is lacking, which translates in only small differences to vanilla ERM performance as seen in Tables \ref{tab:results_civilcomments}, \ref{tab:results_fmow}, \ref{tab:results_amazon} and \ref{tab:results_py150}. However, even with such poor density, rebalancing can occasionally help to improve worst-case o.o.d. performance, as seen in Table \ref{tab:results_fmow}.

While our work shows the benefit of rebalancing, when we are able to fit a good density to the training data, further work is required to determine what embeddings should be used for each problem and how best to fit a density on those embeddings.

\begin{figure}[h!]%
    \centering
    \subfloat[Amazon product reviews\centering]{{ \includegraphics[width=0.23\linewidth]{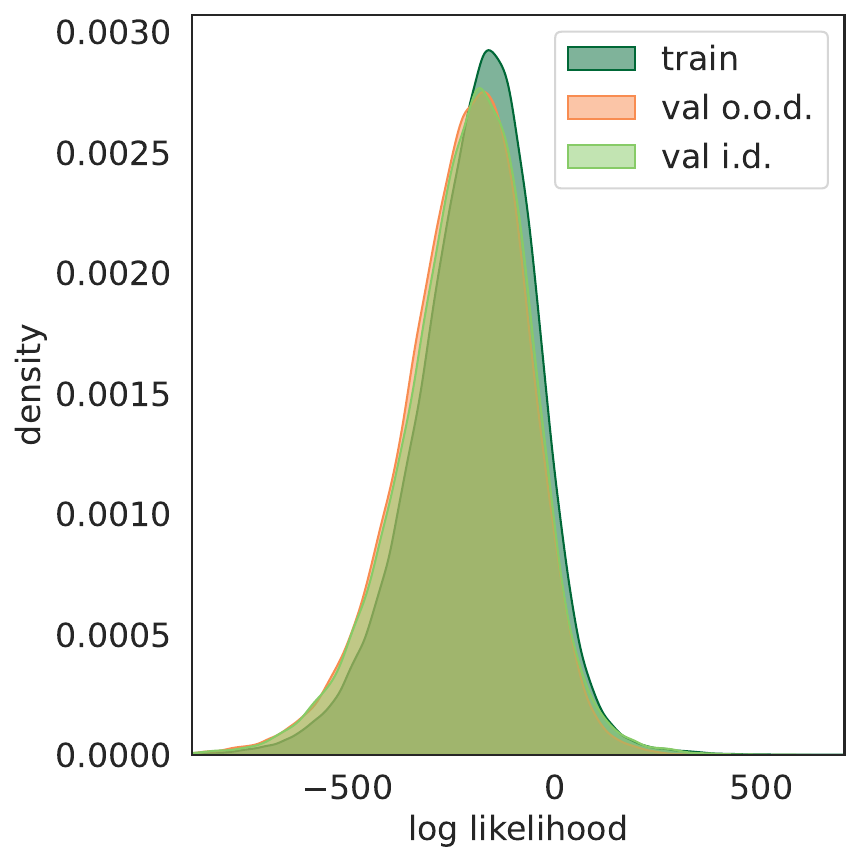} }}%
    \hfill
    \subfloat[FMoW\centering]{{ \includegraphics[width=0.23\linewidth]{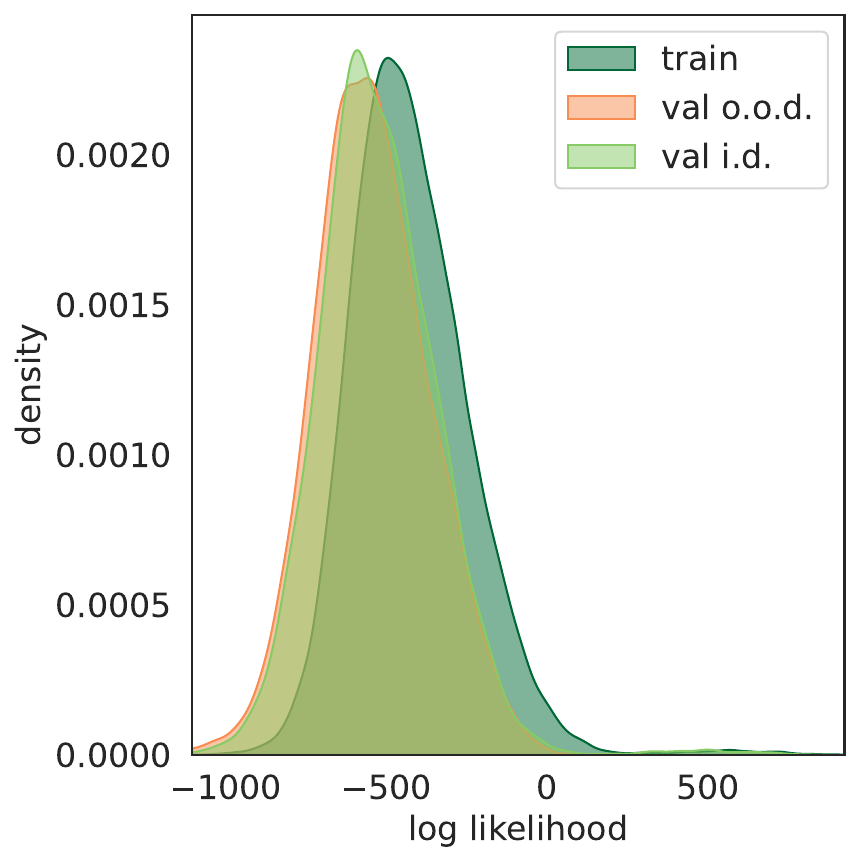} }}%
    \hfill
    \subfloat[Civil Comments\centering]{{ \includegraphics[width=0.23\linewidth]{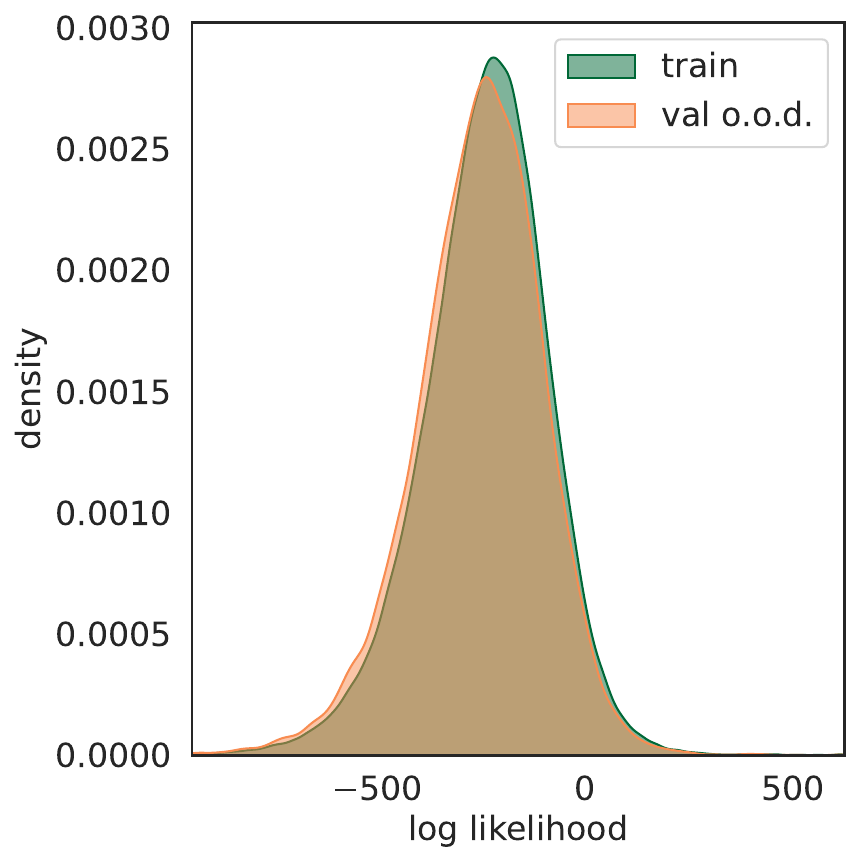} }}%
    \hfill
    \subfloat[Py150\centering]{{ \includegraphics[width=0.23\linewidth]{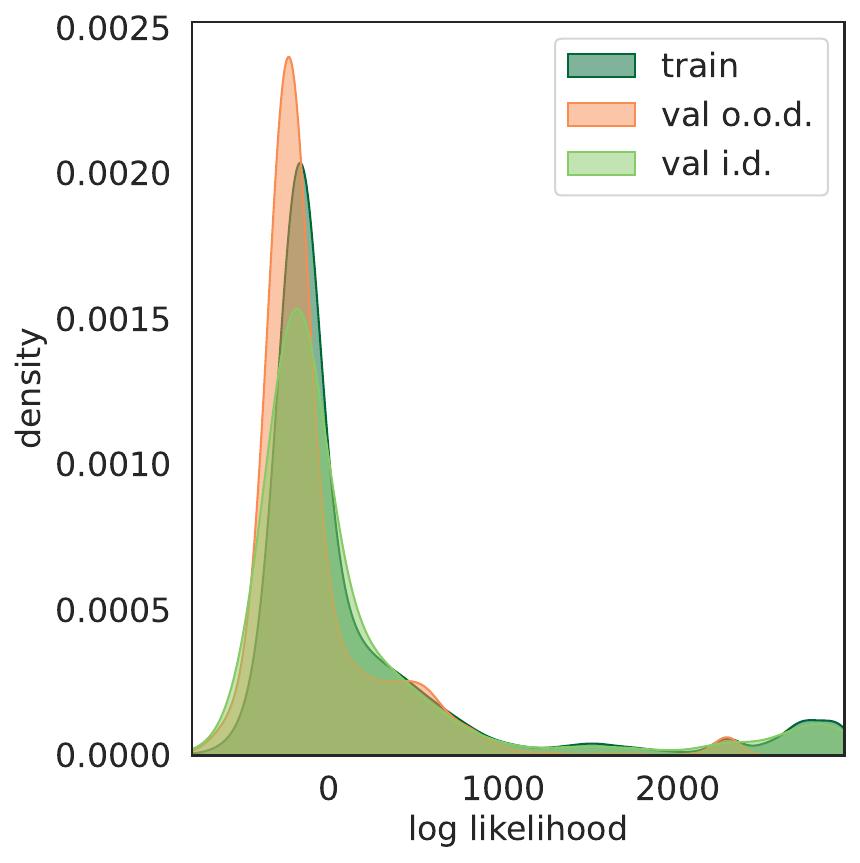} }}%
    \caption{Density fits with no dimensionality reduction for four WILDS \citep{koh2021wilds} datasets, where the fit was poor.}
    \label{fig:bad_density_fits_wilds}
\end{figure}

\begin{table*}[h!]
  \caption{Baseline results on CivilComments. The reweighted (label) algorithm samples equally from the positive and negative class; the group DRO (label) algorithm additionally weights these classes so as to minimize the maximum of the average positive training loss and average negative training loss.
  We show standard deviation across 5 random seeds in parentheses.
  }
  \label{tab:results_civilcomments}
  \centering
  \resizebox{0.9\textwidth}{!}{
    \begin{tabular}{lcccc}
    \toprule
        Algorithm & Avg val acc & Worst-group val acc & Avg test acc & Worst-group test acc\\
        \midrule
        ERM & 92.3 (0.2) & 50.5 (1.9) & {92.2} (0.1) & 56.0 (3.6) \\
        \midrule
        Reweighted (label) & 90.1 (0.4) & 65.9 (1.8) & 89.8 (0.4) & 69.2 (0.9) \\
        Group DRO (label) & 90.4 (0.4) & 65.0 (3.8) & 90.2 (0.3) & 69.1 (1.8) \\
        \midrule
Rebalancing & 92.1 (0.3) & 49.7 (2.2) & 92.1 (0.28) & 55.7 (4.2) \\
Rebalancing (PCA-256) & 92.0 (0.2) & 49.9 (2.6) & 92.0 (0.17) & 55.8 (2.7) \\
Rebalancing (label cond.) & 90.1 (0.5) & 65.4 (2.7) & 89.9 (0.46) & 68.0 (2.2) \\
Rebalancing (PCA-256, label cond.) & 90.3 (0.3) & 65.7 (1.4) & 90.1 (0.34) & 69.4 (1.3) \\
    \bottomrule
  \end{tabular}
}
\end{table*}

\begin{table*}[h!]
  \caption{Average and worst-region accuracies (\%) under time shifts in FMoW. Models are trained on data before 2013 and tested on held-out location coordinates from i.d.\xspace and \ood test sets.
  Parentheses show standard deviation across 3 replicates.}
  \centering
  \resizebox{\textwidth}{!}{\begin{tabular} {l c c c c c c c c}
\toprule
 & \multicolumn{2}{c}{Validation (i.d.)} & \multicolumn{2}{c}{Validation (\ood)} & \multicolumn{2}{c}{Test (i.d.)} & \multicolumn{2}{c}{Test (\ood)} \\
Algorithm & Overall & Worst & Overall & Worst & Overall & Worst & Overall & Worst \\
\midrule
ERM        & 61.2 (0.52) & 59.2 (0.69) & 59.5 (0.37) & 48.9 (0.62) & {59.7} (0.65) & {58.3} (0.92) & {53.0} (0.55) & {32.3} (1.25) \\
CORAL      & 58.3 (0.28) & 55.9 (0.50) & 56.9 (0.25) & 47.1 (0.43) & 57.2 (0.90) & 55.0 (1.02) & 50.5 (0.36)  & 31.7 (1.24) \\
IRM        & 58.6 (0.07) & 56.6 (0.59) & 57.4 (0.37) & 47.5 (1.57) & 57.7 (0.10)  & 56.0 (0.34) & 50.8 (0.13)  & 30.0 (1.37) \\
Group DRO  & 60.5 (0.36) & 57.9 (0.62) & 58.8 (0.19) & 46.5 (0.25) & 59.4 (0.11) & 57.8 (0.60) & 52.1 (0.50) & 30.8 (0.81) \\
\midrule
Rebalancing & 60.5 (0.54) & 58.4 (0.73) & 58.6 (0.57) & 50.9 (1.52) & 59.0 (0.36) & 57.4 (0.86) & 52.4 (0.57) & 33.6 (1.23) \\
Rebalancing (PCA-256) & 60.9 (0.50) & 58.5 (0.51) & 58.9 (0.57) & 52.0 (1.01) & 59.6 (0.34) & 57.8 (0.88) & 52.7 (0.79) & 33.7 (0.87) \\
  \bottomrule
\end{tabular}}
\label{tab:results_fmow}
\end{table*}

\begin{table*}[h!]
  \caption{\textit{Amazon product reviews.} We report the accuracy of models trained using ERM, CORAL, IRM, and group DRO, as well as a reweighting baseline that reweights for class balance. To measure tail performance across reviewers, we also report the accuracy for the reviewer in the 10th percentile. 
  }\label{tab:results_amazon}
  \centering
  \resizebox{\textwidth}{!}{\begin{tabular}{lcccccccc}
  \toprule
    & \multicolumn{2}{c}{Validation (i.d.)} & \multicolumn{2}{c}{Validation (\ood)} & \multicolumn{2}{c}{Test (i.d.)} & \multicolumn{2}{c}{Test (\ood)}\\
  Algorithm & 10th percentile & Average & 10th percentile & Average & 10th percentile & Average & 10th percentile & Average \\
  \midrule
ERM & 58.7 (0.0) & 75.7 (0.2) & 55.2 (0.7) & 72.7 (0.1) & {57.3} (0.0) & {74.7} (0.1) & {53.8} (0.8) & {71.9} (0.1) \\
CORAL & 56.2 (1.7) & 74.4 (0.3) & 54.7 (0.0) & 72.0 (0.3) & 55.1 (0.4) & 73.4 (0.2) & 52.9 (0.8) & 71.1 (0.3) \\
IRM & 56.4 (0.8) & 74.3 (0.1) & 54.2 (0.8) & 71.5 (0.3) & 54.7 (0.0) & 72.9 (0.2) & 52.4 (0.8) & 70.5 (0.3) \\
Group DRO & 57.8 (0.8) & 73.7 (0.6) & 54.7 (0.0) & 70.7 (0.6) & 55.8 (1.0) & 72.5 (0.3) & 53.3 (0.0) & 70.0 (0.5) \\
Label reweighted  & 55.1 (0.8) & 71.9 (0.4) & 52.1 (0.2) & 69.1 (0.5) & 54.4 (0.4) & 70.7 (0.4) & 52.0 (0.0) & 68.6 (0.6)\\
\midrule
Rebalancing & 57.8 (0.8) & 75.0 (0.4) & 54.7 (0.0) & 71.9 (0.3) & 57.8 (0.8) & 73.7 (0.3) & 53.8 (0.8) & 71.2 (0.3) \\
Rebalancing (PCA-256) & 57.8 (0.8) & 75.2 (0.1) & 55.1 (0.8) & 72.1 (0.0) & 57.8 (0.8) & 73.9 (0.1) & 53.3 (0.0) & 71.2 (0.1) \\
  \bottomrule
  \end{tabular}
}
\end{table*}

\begin{table*}[h!]
\caption{Results on Py150. We report both the model's accuracy on predicting class and method tokens and accuracy on all tokens trained using ERM, CORAL, IRM and group DRO. Standard deviations over 3 trials are in parentheses.}
  \centering
\resizebox{\textwidth}{!}{
\begin{tabular}{lcccccccc}
  \toprule
    & \multicolumn{2}{c}{Validation (i.d.)} & \multicolumn{2}{c}{Validation (\ood)} & \multicolumn{2}{c}{Test (i.d.)} & \multicolumn{2}{c}{Test (\ood)}\\
  Algorithm & Method/class  & All &  Method/class  & All &  Method/class  & All &  Method/class & All \\
\midrule
ERM & 75.5 (0.5) & 74.6 (0.4) & 68.0 (0.1) &  69.4 (0.1) & {75.4} (0.4) & {74.5} (0.4) &  {67.9} (0.1) & {69.6} (0.1)  \\
CORAL & 70.7 (0.0) &  70.9 (0.1) & 65.7 (0.2) &  67.2 (0.1) & 70.6 (0.0) &  70.8 (0.1) & 65.9 (0.1) &  67.9 (0.0) \\
IRM &  67.3 (1.1) & 68.4 (0.7) & 63.9 (0.3) &  65.6 (0.1) & 67.3(1.1) & 68.3 (0.7) & 64.3 (0.2) & 66.4 (0.1) \\
Group DRO &  70.8 (0.0) & 71.2 (0.1) &  65.4 (0.0) & 67.3 (0.0) & 70.8 (0.0) & 71.0 (0.0) & 65.9 (0.1) & 67.9 (0.0)\\
\midrule
Rebalancing & 75.1 (0.5) & 74.4 (0.4) & 67.0 (0.1) & 69.0 (0.2) & 74.9 (0.5) & 74.2 (0.4) & 67.2 (0.1) & 69.2 (0.2) \\
Rebalancing (PCA-256) & 75.2 (0.5) & 74.3 (0.4) & 67.1 (0.1) & 69.0 (0.1) & 75.0 (0.5) & 74.2 (0.4) & 67.2 (0.0) & 69.1 (0.1) \\

\bottomrule
\end{tabular}
}
\label{tab:results_py150}
\end{table*}


\section{Relation between DD and expected risks}
\label{app:minmax}

The relation between DD and expected risks is subtle. Clearly, the DD risk is always larger than the expected risk, however, the gap between the two can be zero when considering the optimal learner:

\begin{restatable}{theorem}{minmax}
For any loss $l: \mathbb{Y} \times \mathbb{Y} \to \mathbb{R}_+$ that is convex w.r.t. its first argument, we have
$$
    \max_{q \in \c{Q}_\gamma} \min_{f \in \mathcal{F}} r_{\text{exp}}(f; q) = \min_{f \in \mathcal{F}} r_{\text{dd}}(f; \gamma) \,,
$$
given that $\mathcal{F}$ is a compact convex set and all densities $q: \c{X} \to [0,1]$ are defined over a bounded domain.
\label{theorem:minmax}
\end{restatable}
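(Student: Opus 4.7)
The claim is a minimax equality, so the natural strategy is to apply Sion's minimax theorem (or, failing that, von Neumann's) to the bilinear-in-$q$, convex-in-$f$ functional $(f,q) \mapsto r_{\text{exp}}(f;q)$. The plan is to verify the four structural hypotheses (convexity of both domains, convexity/concavity of the objective in each argument, and compactness/semicontinuity) and then invoke Sion to swap $\min_f$ and $\max_q$.

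\textbf{Step 1: convexity of the objective.} For fixed $f$, the map $q \mapsto r_{\text{exp}}(f;q) = \int l(f(x), f^*(x))\, q(x)\, dx$ is linear in $q$, hence both concave and convex. For fixed $q$, convexity of $l$ in its first argument plus linearity of expectation gives convexity of $f \mapsto r_{\text{exp}}(f;q)$. Convexity of $\mathcal{F}$ is assumed.

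\textbf{Step 2: convexity of $\c{Q}_\gamma$.} I would use the well-known concavity of the differential entropy functional $H$: if $H(q_0), H(q_1) \geq H(u) - \gamma$, then for any $\lambda \in [0,1]$,
\begin{equation*}
H(\lambda q_0 + (1-\lambda) q_1) \geq \lambda H(q_0) + (1-\lambda) H(q_1) \geq H(u) - \gamma,
\end{equation*}
so $\lambda q_0 + (1-\lambda) q_1 \in \c{Q}_\gamma$. This establishes convexity of the constraint set.

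\textbf{Step 3: compactness and semicontinuity.} This is the main technical obstacle. Since $\c{X}$ is bounded, the set of probability measures on $\c{X}$ is tight, and Prokhorov's theorem yields relative weak compactness. To get $\c{Q}_\gamma$ itself (weakly) compact I would invoke upper semicontinuity of differential entropy along weakly convergent sequences with uniformly bounded densities — the entropy sublevel-set constraint $H(q) \geq H(u) - \gamma$ is then preserved under weak limits, making $\c{Q}_\gamma$ weakly closed and hence weakly compact. For $r_{\text{exp}}(\cdot;q)$ to be lower semicontinuous on $\mathcal{F}$ and $r_{\text{exp}}(f;\cdot)$ upper semicontinuous on $\c{Q}_\gamma$, I would rely on continuity of $l$ together with boundedness of $l$ on $\c{X} \times \c{Y}$ (inherited from the compact setting), which gives weak continuity of the linear functional $q \mapsto \int l(f,f^*)\, dq$. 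Attainment of the $\min$ on $\mathcal{F}$ comes from its stated compactness plus convexity of the objective.

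\textbf{Step 4: applying Sion's theorem.} With the hypotheses above verified, Sion's minimax theorem directly gives
\begin{equation*}
\min_{f \in \mathcal{F}} \max_{q \in \c{Q}_\gamma} r_{\text{exp}}(f; q) = \max_{q \in \c{Q}_\gamma} \min_{f \in \mathcal{F}} r_{\text{exp}}(f; q),
\end{equation*}
and by definition the left-hand side equals $\min_{f \in \mathcal{F}} r_{\text{dd}}(f; \gamma)$. The delicate ingredient throughout is the need to work in a topology (weak convergence of measures, with densities treated through a dominating reference such as the uniform on $\c{X}$) in which both the entropy constraint and the expectation functional behave well; I expect that pinning down the correct function-space setting — especially ensuring $\c{Q}_\gamma$ is weakly compact rather than merely weakly closed — will absorb most of the proof's effort, while the algebraic convexity steps are essentially immediate.
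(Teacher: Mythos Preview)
Your proposal is correct and follows essentially the same route as the paper: verify linearity/convexity of $r_{\text{exp}}$ in each argument, establish convexity of $\c{Q}_\gamma$ via concavity of entropy, argue weak compactness of $\c{Q}_\gamma$ from boundedness of $\c{X}$ together with semicontinuity of entropy, and then invoke Sion's minimax theorem. The paper's own write-up is slightly less explicit about Prokhorov and the semicontinuity of the expectation functional, but the skeleton and all key ingredients coincide with yours.
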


This is a somewhat abstract result that we include mainly for completeness. We find its usefulness only as a confirmation that the optimal distribution DD risk will not be worse than the best expected risk over the worst distribution in $\c{Q}_\gamma$. 

\subsection{Proof Theorem~\ref{theorem:minmax}}

\begin{proof}
It follows from the max–min inequality, that the following relation generally holds:
$$
    \max_{q \in \c{Q}_\gamma } \min_{f \in \mathcal{F}} r_{\text{exp}}(f; q) \leq \min_{f \in \mathcal{F}}  \max_{q \in \c{Q}_\gamma}  r_{\text{exp}}(f; q) = \min_{f \in \mathcal{F}} r_{\text{dd}}(f; \gamma).
$$
It is a consequence of Sion's minimax theorem~\citep{komiya1988elementary} that the above is an equality when $r_{\text{exp}}(f; q)$ is quasi-convex w.r.t. $f$, quasi-concave w.r.t. $q$, and when both $\mathcal{F}$ and $\c{Q}_\gamma$ are compact convex sets. 

\textbf{Convexity w.r.t. $f$ and concavity w.r.t. $q$.}
We know that the expected risk is linear w.r.t $q$ as 
$$
r_{\text{exp}}(f; q) = \int_{\c{X}} q(x) \, l\bigl( f(x), f^*(x) \bigr) dx \,.
$$
Further, if we assume that the loss function is convex w.r.t. $f$ then the expected risk is also convex, since the convex combination of convex functions is also convex (see Appendix~\ref{app:minmax_convexity}).

\textbf{Convexity of $\c{Q}_\gamma$.} Let $\Omega$ be a bounded domain in $\mathbb{R}^d$. The set of distributions we are interested in is those that have entropy at least $\gamma + H(u)$. The entropy $H(p)$ of a probability distribution $p$ over a bounded domain $\Omega$ is a concave function. For any two distributions $p$ and $q$ on $\Omega$, and for any $\lambda \in [0, 1]$, the entropy $H$ satisfies
$$ 
    H\bigl( \lambda p + (1-\lambda) q \bigr) \geq \lambda H(p) + (1-\lambda) H(q). 
$$
Given this property, if $p$ and $q$ are in the set $\c{Q}_\gamma$ (i.e., $H(p) \geq \gamma + H(u)$ and $H(q) \geq \gamma + H(u)$), then for any $\lambda \in [0, 1]$,
$$
    H(\lambda p + (1-\lambda) q) \geq \lambda H(p) + (1-\lambda) H(q) \geq \lambda \bigl( \gamma + H(u) \bigr) + (1-\lambda) \bigl( \gamma + H(u) \bigr) = \gamma + H(u) \,. 
$$
Thus, $\lambda p + (1-\lambda) q \in \c{Q}_\gamma$, demonstrating that $\c{Q}_\gamma$ is a convex set.

\textbf{Compactness.} In the space of probability distributions on a bounded domain $\Omega$, the set of all distributions is bounded because the total probability mass is 1, and $\Omega$ itself is bounded. We need to check whether $\c{Q}_\gamma$ is closed in the weak topology. In the space of probability distributions, a sequence of distributions $\{p_n\}$ converges weakly to $p$ if for all bounded continuous functions $f$,
$$
    \lim_{n \to \infty} \int f \, dp_n = \int f \, dp.     
$$
Entropy is lower semi-continuous in the weak topology, which means that if a sequence of distributions $\{p_n\}$ converges weakly to $p$, then $\liminf_{n \to \infty} H(p_n) \geq H(p)$.

To show that $\c{Q}_\gamma$ is closed, suppose we have a sequence $\{p_n\}$ in $q$ such that $p_n \to p$ weakly. Since $p_n \in \c{Q}_\gamma$, we have $H(p_n) \geq k$ for all $n$. Using the lower semi-continuity of entropy:
$$ 
    H(p) \geq \liminf_{n \to \infty} H(p_n) \geq k. 
$$
Therefore, $p \in \c{Q}_\gamma$, showing that $\c{Q}_\gamma$ is closed in the weak topology.

Since $\c{Q}_\gamma$ is convex, closed, and bounded in the space of probability distributions on a bounded domain $\Omega$, we can conclude that $\c{Q}_\gamma$ is a compact convex set. 
\end{proof}

\subsection{Proof of convexity of $ r_{\text{exp}}(f; p) $ w.r.t. the first argument}
\label{app:minmax_convexity}

Let $ r_{\text{exp}}(f; p) = \sum_{x} p(x) l(f(x), f^*(x)) $, where $ l $ is convex with respect to $ f(x) $. We aim to show that $ r_{\text{exp}}(f; p) $ is convex with respect to the function $ f $.

Consider any two functions $ f_1 $ and $ f_2 $ and a scalar $ \lambda \in [0, 1] $. We need to show that:
\[
r_{\text{exp}}(\lambda f_1 + (1 - \lambda) f_2; p) \leq \lambda r_{\text{exp}}(f_1; p) + (1 - \lambda) r_{\text{exp}}(f_2; p).
\]
First, express $ r_{\text{exp}}(\lambda f_1 + (1 - \lambda) f_2; p) $:
\[
r_{\text{exp}}(\lambda f_1 + (1 - \lambda) f_2; p) = \sum_{x} p(x) l((\lambda f_1 + (1 - \lambda) f_2)(x), f^*(x)).
\]
Since $(\lambda f_1 + (1 - \lambda) f_2)(x) = \lambda f_1(x) + (1 - \lambda) f_2(x)$, we have:
\[
r_{\text{exp}}(\lambda f_1 + (1 - \lambda) f_2; p) = \sum_{x} p(x) l(\lambda f_1(x) + (1 - \lambda) f_2(x), f^*(x)).
\]
By the convexity of $ l $ in its first argument, we have
\[
l(\lambda f_1(x) + (1 - \lambda) f_2(x), f^*(x)) \leq \lambda l(f_1(x), f^*(x)) + (1 - \lambda) l(f_2(x), f^*(x)).
\]
Multiplying both sides by $ p(x) $ and summing over $ x $ gives,
\[
\sum_{x} p(x) l(\lambda f_1(x) + (1 - \lambda) f_2(x), f^*(x)) \leq \sum_{x} p(x) (\lambda l(f_1(x), y) + (1 - \lambda) l(f_2(x), f^*(x))),
\]
while distributing the sums leads to
\begin{align*}
    \sum_{x} p(x) (\lambda l(f_1(x), f^*(x)) + (1 - \lambda) l(f_2(x), f^*(x))) &= \lambda \sum_{x} p(x) l(f_1(x), f^*(x)) \\
&+ (1 - \lambda) \sum_{x} p(x) l(f_2(x), f^*(x)).
\end{align*}
We have thus far shown that
\[
r_{\text{exp}}(\lambda f_1 + (1 - \lambda) f_2; p) \leq \lambda r_{\text{exp}}(f_1; p) + (1 - \lambda) r_{\text{exp}}(f_2; p).
\]
Since the above inequality holds for any $ f_1, f_2 $, and $ \lambda \in [0, 1] $, $ r_{\text{exp}}(f; p) $ is convex with respect to $ f $. This concludes the proof.

\section{Proof of Theorem~\ref{theorem:uniform_is_worst_case_optimal}}
\label{app:uniform}

\uniform*

\begin{proof}

The theorem is asking the following question: assuming we learn a classifier w.r.t some training distribution $p$, and we know that the expected risk w.r.t. $p$ is $\varepsilon$, what is the worst case expected risk over all possible data distributions $q \in \c{Q}_\gamma$ and learned functions $f$? To answer this question, we first show that the DD risk grows proportionally with the volume of the set of examples $\c{E} \subset \c{X}$ that the classifier mislabels, and then argue that the worst classifier within $\c{F}_{u,\varepsilon}$ has smaller such volume as compared to the worst classifier within some $\c{F}_{p,\varepsilon}$ where $p\neq u$.

Denote by $\c{E} \subset \c{X}$ the set which contains all instances that a classifier $f$ mislabels: 
\begin{align*}
    \c{E} = \{ x \in \c{X} \text{ such that } f(x) \neq f^*(x)\}
\end{align*}
Our first step in proving the theorem entails characterizing the relation between DD risk and error volume $\text{vol}(\c{E})$. To that end, the following lemma characterizes the distribution $q^* \in \c{Q}_\gamma$ that maximizes~\eqref{def:agnostic_risk}:

\begin{restatable}{lemma}{errorvolume}
Amongst all densities with $q(\c{E}) = \epsilon$ the one that has the maximum entropy is given by 
\begin{align*}
    q^*_\epsilon(x) = 
    \begin{cases}
        \epsilon/\vol(\c{E})  & x \in \c{E} \\
        (1-\epsilon)/\vol(\c{X} - \c{E}) & \text{otherwise,}
    \end{cases}
\end{align*}
achieving entropy of
$H(q^*_\epsilon) = \epsilon \left( \log(\vol(\c{E})) - \log(\epsilon) \right) + (1-\epsilon) \left( \log(\vol(\c{X} - \c{E})) - \log(1-\epsilon)\right)$.
Furthermore, if $\gamma$ is chosen such that $\epsilon$ is the maximal value satisfying $q^*_\epsilon \in \c{Q}_\gamma$, the DD risk is given by $r_{\text{dd}}(f; \gamma) = \epsilon$.
\label{lemma:error_volume}
\end{restatable}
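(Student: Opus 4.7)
The claim splits into two essentially independent pieces: (a) identifying the entropy-maximizer among densities with $q(\c{E}) = \epsilon$ and computing its entropy, and (b) translating the maximality condition on $\epsilon$ into an identity for $r_{\text{dd}}(f;\gamma)$ under the zero-one loss (inherited from the surrounding Theorem~\ref{theorem:uniform_is_worst_case_optimal}). I would handle them in this order.

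\textbf{Part (a): form of $q^*_\epsilon$ and its entropy.} For any admissible $q$ with $\int_{\c{E}} q = \epsilon$, decompose it as a mixture through the conditional densities $\tilde q_{\c{E}}(x) = q(x)/\epsilon$ on $\c{E}$ and $\tilde q_{\c{X}\setminus\c{E}}(x) = q(x)/(1-\epsilon)$ on $\c{X}\setminus\c{E}$. A direct rearrangement of $H(q) = -\int q \log q$ gives
\begin{align*}
H(q) = \epsilon\, H(\tilde q_{\c{E}}) + (1-\epsilon)\, H(\tilde q_{\c{X}\setminus\c{E}}) - \epsilon \log \epsilon - (1-\epsilon)\log(1-\epsilon).
\end{align*}
The last two terms are fixed by $\epsilon$, while the first two depend on the shape of $q$. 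By the classical maximum-entropy principle on a bounded region, each is maximized when the corresponding conditional is uniform, which gives exactly the $q^*_\epsilon$ stated. Substituting $H(\tilde q_{\c{E}}) = \log \vol(\c{E})$ and $H(\tilde q_{\c{X}\setminus\c{E}}) = \log \vol(\c{X} - \c{E})$ into the identity yields the claimed closed form for $H(q^*_\epsilon)$.

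\textbf{Part (b): maximal $\epsilon$ equals $r_{\text{dd}}(f;\gamma)$.} Under the zero-one loss, $\bb{E}_{x \sim q}[l(f(x), f^*(x))] = q(\c{E})$, so $r_{\text{dd}}(f;\gamma) = \max_{q \in \c{Q}_\gamma} q(\c{E})$. The lower bound $r_{\text{dd}}(f;\gamma) \geq \epsilon$ is immediate since $q^*_\epsilon \in \c{Q}_\gamma$ by hypothesis and $q^*_\epsilon(\c{E}) = \epsilon$. For the matching upper bound, suppose for contradiction that some $q \in \c{Q}_\gamma$ has $q(\c{E}) = \epsilon' > \epsilon$; then part (a) gives $H(q^*_{\epsilon'}) \geq H(q) \geq H(u) - \gamma$, so $q^*_{\epsilon'} \in \c{Q}_\gamma$, contradicting the maximality of $\epsilon$.

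\textbf{Expected obstacle.} The most delicate step is the decomposition identity together with the implicit monotonicity it induces. The function $\epsilon \mapsto H(q^*_\epsilon)$ is concave with maximum at $\epsilon = \vol(\c{E})/\vol(\c{X})$ (the uniform on $\c{X}$), and strictly decreasing beyond it; one must observe that the ``maximal $\epsilon$'' referred to in the statement lies on this decreasing branch, since the expected risk is assumed smaller than $\sfrac{1}{2}$ and the error region has $\vol(\c{E})$ commensurately small, so increasing $\epsilon$ above the uniform value concentrates mass on $\c{E}$ and lowers entropy monotonically. Degenerate edge cases ($\vol(\c{E}) \in \{0, \vol(\c{X})\}$) should be noted but are routine.
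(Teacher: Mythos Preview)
Your proposal is correct and follows essentially the same route as the paper: the entropy decomposition into conditional pieces on $\c{E}$ and $\c{X}\setminus\c{E}$, the appeal to the maximum-entropy principle on each piece, and the contradiction argument for $r_{\text{dd}}(f;\gamma)\leq\epsilon$ all match the paper's proof almost line for line. Your ``expected obstacle'' paragraph on the decreasing branch of $\epsilon\mapsto H(q^*_\epsilon)$ is a welcome clarification that the paper relegates to the surrounding theorem rather than the lemma itself.
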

The proof of the Lemma is provided in Appendix~\ref{app:error_volume_lemma}.

For any fixed $\gamma$, the worst-case distribution $q^*$ (equivalently, the distribution $q^*_\epsilon \in \c{Q}_\gamma$ with the largest $\epsilon$) will be piece-wise uniform in $\c{E}$ and $\c{X} - \c{E}$, respectively, and the DD risk is exactly equal to $q^*(\c{E}) = \epsilon$. The DD risk is smaller than one when the mislabeled set is not sufficiently large to satisfy the entropy lower bound $H(u) - \gamma$. In those cases, $q^*$ will assign the maximum probability density to $q^*(\c{E})$ while ensuring that the entropy lower bound is met. Notice that, for any fixed $\epsilon$, entropy is a monotonically increasing function of $\vol(\c{E})$ (since by assumption $\vol(\c{E}) \leq \vol(\c{X} - \c{E})$). As such, the DD risk of a classifier $f$ grows with the error volume. 

With this in place, to prove the theorem it suffices to show that $\forall p \neq u$  $\mathcal{F}_{p, \varepsilon}$ always contains a function whose error volume is greater than $\varepsilon$. More formally, there exists $f \in \mathcal{F}_{p, \varepsilon}$ such that $\text{vol}(\c{E}_p) = \int_X \ell\big(f(x), f^*(x)\big) \text{d}x > \varepsilon$. Showing this suffices because it follows from the definition that $\text{vol}(\c{E}_u) = \int_X \ell\big(f(x), f^*(x)\big) \text{d}x = \varepsilon$ for every $f \in \mathcal{F}_{u,\varepsilon}$. 

The aforementioned claim follows by noting that since $p$ is non-uniform, there exists a region of volume strictly greater than $\varepsilon$ whose mass under $p$ is exactly $\varepsilon$. If not, either $p$ is the uniform distribution or it cannot be a valid probability distribution as it must have a total probability mass above 1. Then, we can always find some $f \in \mathcal{F}_{p,\varepsilon}$ so that $\ell\big(f(x), f^*(x)\big)$ is 1 in this region and 0 elsewhere. This concludes our argument.
\end{proof}

\subsection{Proof of Lemma~\ref{lemma:error_volume}}
\label{app:error_volume_lemma}

We repeat the lemma setup and statement here for completeness:

Denote by $\c{E} \subset \c{X}$ the set which contains all instances that a classifier $f$ miss-labels: 
\begin{align*}
    \c{E} = \{ x \in \c{X} \text{ such that } f(x) \neq f^*(x)\}
\end{align*}

\errorvolume*

\begin{proof}
The distributionally diverse risk of $f$ can be determined by identifying the distribution $q^* \in \c{Q}_\gamma$ that maximizes $q^*(\c{E})$. 
Consider any partitioning of $\c{X}$ into sets $\c{E}$ and $\c{E}^\bot = \c{X} - \c{E}$. We claim that, amongst all densities with $q(\c{E}) = \epsilon$ (and thus the same DD risk) the one that has the maximum entropy is given by 
\begin{align*}
    q^*_\epsilon(x) = 
    \begin{cases}
        \epsilon/\vol(\c{E})  & x \in \c{E} \\
        (1-\epsilon)/\vol(\c{E}^\bot) & \text{otherwise,}
    \end{cases}
\end{align*}
achieving entropy of
\begin{align*}
    H(q^*_\epsilon) &= \int_{\c{E}} \frac{\epsilon}{\vol(\c{E})} \log(\vol(\c{E})/\epsilon) dx + \int_{\c{E}^\bot} \frac{1-\epsilon}{\vol(\c{E}^\bot)} \log(\vol(\c{E}^\bot)/(1-\epsilon)) dx \\
    &= \epsilon \left( \log(\vol(\c{E})) - \log(\epsilon) \right) + (1-\epsilon) \left( \log(\vol(\c{E}^\bot)) - \log(1-\epsilon)\right). 
\end{align*}

To see this, we notice that the entropy over $\c{X}$ can be decomposed in terms of the entropy of the conditional distributions defined on $\c{E}$ and $\c{E}^\bot$: 
\begin{align}
    H(q) &= \int_{\c{E}} q(x) \log(1/q(x)) dx + \int_{\c{E}^\bot} q(x) \log(1/q(x)) dx \notag \\
         &= q(\c{E}) \int_{\c{E}} \frac{q(x)}{q(\c{E})} \log(\frac{q(\c{E})}{q(x)q(\c{E})}) dx +  q(\c{E}^\bot)  \int_{\c{E}^\bot} \frac{q(x)}{q(\c{E}^\bot)} \log(\frac{q(\c{E}^\bot)}{q(x)q(\c{E}^\bot)}) dx \notag \\
         &= \epsilon \int_{\c{E}} q_{\c{E}}(x) \log(\frac{1}{q_{\c{E}}(x) q(\c{E})}) dx + (1-\epsilon) \int_{\c{E}^\bot} q_{\c{E}^\bot}(x) \log(\frac{1}{q_{\c{E}^\bot}(x)q(\c{E}^\bot)}) dx    \tag{$\ast$} \\
         &= \epsilon \left( H(q_{\c{E}}) - \log(\epsilon) \right) + (1-\epsilon) \left( H(q_{\c{E}^\bot}) - \log(1-\epsilon)\right) \notag 
\end{align}
Where in ($\ast$) we define $q_{\c{E}}(x) = q(x) / q(\c{E})$ and $q_{\c{E}^\bot}(x) = q(x) / q(\c{E}^\bot)$ to be densities supported in $\c{E}$ and $\c{E}^\bot$, respectively.

It is well known that the maximal entropy for distributions supported on a set $\c{E}$ is achieved by the uniform distribution and is given by $ \max_{q_{\c{E}}} H(q_{\c{E}}) = -\int_\c{E} q_{\c{E}}(\c{E}) \log(q_{\c{E}}(\c{E})) dx = \log(\vol{(\c{E})})$ (and similarly for $q_{\c{E}^\bot}$, respectively), from which it follows that
\begin{align*}
    H(q) &\leq \epsilon \left( \log(\vol{(\c{E})}) - \log(\epsilon) \right) + (1-\epsilon) \left( \log(\vol{(\c{E}^\bot)}) - \log(1-\epsilon)\right) 
    = H(q^*_\epsilon),
\end{align*}
meaning that the upper bound is exactly achieved by $q^*_\epsilon$.

It now easily follows that $r_{\text{dd}}(f; \gamma) = \epsilon$. Note that in the case of the zero-one loss, the expectation in the definition of $r_{\text{dd}}$ is simply $q(\c{E}) = \epsilon$. Since $q^*_\epsilon \in \c{Q}_\gamma$, we immediately have $r_{\text{dd}}(f; \gamma) \geq \epsilon$. The reverse inequality, $r_{\text{dd}}(f; \gamma) \leq \epsilon$, follows from the maximality of $\epsilon$. Otherwise, we would have some $q^\prime \in \c{Q}_\gamma$ with $q^\prime(\c{E}) = \epsilon^\prime > \epsilon$. But by the first part of the theorem, this would imply that $q^*_{\epsilon^\prime} \in \c{Q}_\gamma$, contradicting the maximality of $\epsilon$.
\end{proof}

\section{Proof of Theorem~\ref{theorem:expected_agnostic_gap}}
\label{app:proof:expected_agnostic_gap}

\expectedagnosticgap*

\begin{proof}
To characterize the DD risk $r_{\text{dd}}(f; \gamma)$, we consider the density defined in Lemma~\ref{lemma:error_volume}:
\begin{align*}
    q^*_\epsilon(x) = 
    \begin{cases}
        \epsilon/\vol(\c{E})  & x \in \c{E} \\
        (1-\epsilon)/\vol(\c{X} - \c{E}) & \text{otherwise \,,}
    \end{cases}
\end{align*}
and proceed to identify
the largest $\epsilon$ such that $q^*_\epsilon \in \c{Q}_\gamma$:
\begin{align*}
    r_{\text{dd}}(f; \gamma) = \max_{\epsilon \leq 1} \epsilon \quad \text{subject to} \quad H(q^*_\epsilon) \geq H(u) - \gamma.
\end{align*}
We are interested in the regime $\log(\vol(\c{E})) < H(u) - \gamma$ or equivalently $\vol(\c{E}/\c{X}) = r_{\text{exp}}(f; u) < \exp(-\gamma)$. When the above inequality is not met, the DD risk is trivially 1 as the entropy constraint is not violated for any $\epsilon$. 

To proceed, we notice that the entropy can be written as a KL-divergence between a Bernoulli random variable $v_1$ that is equal to one with probability $P(v_1 = 1) = \epsilon$ and a Bernoulli random variable $v_2$ that is equal to one with probability $P(v_2 = 1) = r_{\text{exp}}(f; u)$ and zero with probability $P(v_2 = 0) = 1 - r_{\text{exp}}(f; u)$. 
\begin{align*}
    H(q^*_\epsilon) 
    &= \epsilon \left( \log\left(\frac{\vol(\c{E})}{\vol(\c{X})}\right) + \log(\vol(\c{X})) - \log(\epsilon) \right) \\
    &+ (1-\epsilon) \left( \log\left(\frac{\vol(\c{X})-\vol(\c{E})}{\vol(\c{X})}\right) + \log(\vol(\c{X})) - \log(1-\epsilon)\right) \\
    &= \epsilon \left( \log(r_{\text{exp}}(f; u)) - \log(\epsilon) \right) + (1-\epsilon) \left( \log(1-r_{\text{exp}}(f; u)) - \log(1-\epsilon)\right) + \log(\vol(\c{X})) \\
    &= \epsilon \log\left( \frac{r_{\text{exp}}(f; u)}{\epsilon} \right) + (1-\epsilon)  \log\left(\frac{1-r_{\text{exp}}(f; u)}{1-\epsilon}\right) + \log(\vol(\c{X})) \\
    &= \log(\vol(\c{X})) - D_\text{KL}(v_1|| v_2) \\
    &= H(u) - D_\text{KL}(v_1|| v_2) 
\end{align*}
or equivalently
$ D_\text{KL}(v_1|| v_2) = H(u) - H(q^*_\epsilon). $
%
We will derive two alternative bounds, each of which is tight in a different regime.

For the first bound, we rely on Pinsker's inequality: 
\begin{align*}
D_\text{KL}(v_1|| v_2) 
&\geq (|\epsilon - r_{\text{exp}}(f; u)| + |(1-\epsilon - (1-r_{\text{exp}}(f; u))|)^2/2 \\
&= 2 |\epsilon - r_{\text{exp}}(f; u)|^2 
\end{align*}
implying 
\begin{align*}
    \sqrt{\frac{ H(u) - H(q^*_\epsilon) }{2}} \geq |\epsilon - r_{\text{exp}}(f; u)|.
\end{align*}
We now take $\epsilon$ to be maximal, i.e., $\epsilon = r_{\text{dd}}(f; \gamma)$, and apply the constraint $H(u) - H(q^*_\epsilon) \leq \gamma$ to obtain
\[
|r_{\text{dd}}(f; \gamma) - r_{\text{exp}}(f; u)| \leq \sqrt{\frac{\gamma}{2}} \,.
\]

By definition of the DD risk, we note that the optimal choice of $\epsilon$ must abide to $\epsilon \geq r_{\text{exp}}(f; u)$ (otherwise, $u$ leads to a worse expected error than $q^*_\epsilon$), meaning  
\begin{align*}
    r_{\text{dd}}(f; \gamma) \leq r_{\text{exp}}(f; u) + \sqrt{\frac{\gamma}{2}}, 
\end{align*}
which shows that the gap converges to zero as $\gamma \to 0$, but wrongly suggests that the DD risk is never below $\sqrt{\frac{\gamma}{2}}$ even when $r_{\text{exp}}(f; u) = 0$.

We may also derive a tighter (and more involved) bound if we take a Taylor-series expansion of $f(\epsilon) = D_\text{KL}(v_1|| v_2)$ at $\epsilon = \alpha \in (r_{\text{exp}}(f; u), 1)$: 
\begin{align*}
    f(\epsilon) 
    &\geq f\left(\alpha\right) + f'\left(\alpha\right) \left(\epsilon - \alpha\right) \\ 
    &= \epsilon \left( \log\left( \frac{\alpha}{1-\alpha}\right) + \log\left( \frac{1}{r_{\text{exp}}(f; u)}-1\right) \right) + \log\left( \frac{1-\alpha}{1-r_{\text{exp}}(f; u)}\right)  
\end{align*}
Substituting $ D_\text{KL}(v_1|| v_2) \leq \gamma$ as above and solving for $\epsilon$ then yields: 
\begin{align*}
    \gamma - \log\left( \frac{1-\alpha}{1-r_{\text{exp}}(f; u)}\right)
&\geq  \epsilon \left( \log\left( \frac{\alpha}{1-\alpha}\right) + \log\left( \frac{1}{r_{\text{exp}}(f; u)}-1\right) \right).  
\end{align*}
implying
\begin{align*}
   r_{\text{dd}}(f; \gamma)  \leq \min_{a\in (r_{\text{exp}}(f; u), 1)} \left\{\frac{\gamma - \log\left( \frac{1-\alpha}{1-r_{\text{exp}}(f; u)}\right)}{\log\left( \frac{\alpha}{1-\alpha}\right) + \log\left( \frac{1}{r_{\text{exp}}(f; u)}-1\right)} \right\}.
\end{align*}

For our simplified bound, we will utilize the following inequality involving the KL-divergence: 
\begin{align*}
    D_\text{KL}(v_1|| v_2) 
&\geq  \epsilon \log\left( \frac{1}{r_{\text{exp}}(f; u)}\right) - \log(2) 
\end{align*}
which can be derived by a Taylor-series expansion of $f(\epsilon) = D_\text{KL}(v_1|| v_2)$ at $\epsilon = 0.5$: 
\begin{align*}
    f(\epsilon) 
    &\geq f\left(\frac{1}{2}\right) + f'\left(\frac{1}{2}\right) \left(\epsilon - \frac{1}{2}\right) \\ 
    &= \epsilon \log\left( \frac{1}{r_{\text{exp}}(f; u)}\right) - \log(2) + (1-\epsilon) \log\left( \frac{1}{1-r_{\text{exp}}(f; u)}\right) \\
    &\geq \epsilon \log\left( \frac{1}{r_{\text{exp}}(f; u)}\right) - \log(2)
\end{align*}

Substituting $ D_\text{KL}(v_1|| v_2) \leq \gamma$ as above and solving for $\epsilon$ then yields: 
\begin{align*}
    \gamma + \log(2)
&\geq  \epsilon \log\left( \frac{1}{r_{\text{exp}}(f; u)}\right).  
\end{align*}

We thus have
\begin{align*}
   r_{\text{dd}}(f; \gamma)  \leq  \frac{\gamma + \log(2)}{\log\left(\frac{1}{r_{\text{exp}}(f; u)}\right)},
\end{align*}
which reveals that, as expected, the DD risk converges to zero as $r_{\text{exp}}(f; u) \to 0$ for any $\gamma$.
\end{proof}

\section{Proof of Theorem~\ref{cor:pac-bayes}}
\label{app:pac-bayes}

\pacbayes*

\begin{proof}
The expected risk difference is given by:
\begin{align*}
    |r(\pi_Z, q) - r(\pi_Z, p_Z)| 
    &= |\sum_{f \in \c{F}} \pi_Z(f) r_\text{exp}(f, q) - \sum_{f \in \c{F}}  \pi_Z(f) r_\text{exp}(f, p_Z) | \\
    &\hspace{-20mm}= |\sum_{f \in \c{F}} (\pi_Z(f)-\pi(f)) \, r_\text{exp}(f, q) - \sum_{f \in \c{F}} (\pi_Z(f)-\pi(f)) \, r_\text{exp}(f, p_Z) \\
    &+ \sum_{f \in \c{F}} \pi(f) \left( r_\text{exp}(f, p_Z) - r(f, q) \right)|  \\
     &\hspace{-20mm}\leq \sum_{f \in \c{F}} |\pi_Z(f)-\pi(f)| \, ( \sup_{f \in \c{F}} r_\text{exp}(f, q) + \sup_{f \in \c{F}} r_\text{exp}(f, p_Z))  \\&+ | \sum_{f \in \c{F}} \pi(f) \left( r_\text{exp}(f, p_Z) - r_\text{exp}(f, q) \right)| \\
     &\hspace{-20mm}= 2 \, \delta(\pi_Z, \pi) + |\bb{E}_{f \sim \pi} [r_\text{exp}(f, p_Z)] -  \bb{E}_{f \sim \pi} [r_\text{exp}(f, q)]| \\
     &\hspace{-20mm}= 2 \, \delta(\pi_Z,\pi) + |r_\text{exp}(\pi, p_Z) - r_\text{exp}(\pi, q)|
\end{align*}
implying 
$$
\bb{E}_{f \sim \pi_Z} [r_\text{exp}(f, q)] \leq \bb{E}_{f \sim \pi_Z} [r_\text{exp}(f, p_Z)] + 2 \, \delta(\pi_Z,\pi) + |r_\text{exp}(\pi, p_Z) - r_\text{exp}(\pi, q)|
$$
Taking the maximum over all distributions $q$ of sufficient entropy yields: 
\begin{align*}
    \max_{q \in \c{Q}_\gamma} \bb{E}_{f \sim \pi_Z} [r_\text{exp}(f, q)] \leq \bb{E}_{f \sim \pi_Z} [r_\text{exp}(f, p_Z)] + 2 \, \delta(\pi_Z,\pi) + \max_{q \in \c{Q}_\gamma} |r_\text{exp}(\pi, p_Z) - r_\text{exp}(\pi, q)|
\end{align*}
If we select $\pi$ to be an uninformed prior and our hypothesis class to sufficiently diverse, we expect the right-most term to be negligible for typical distributions. 

More precisely, in the standard case the term is zero if the prior assigns to every possible data labeling the same probability. Formally, we say that a prior $\pi$ over the hypothesis class is \textit{unbiased} if, for any $x$ and every $y$, we have $ \pi(f(x)=y) = 1/|\c{Y}|$. In this case, the right-most term vanishes as
\begin{align*}
    \bb{E}_{f \sim \pi} [r_\text{exp}(f, q)] 
    &= \sum_{f} \pi(f) \sum_{z} q(z) \, l\left(f(x),f^*(x)\right) \\
    &= \sum_{z} q(z) \sum_{f} \pi(f) \, l \left(f(x), f^*(x)\right) \\
    &= \sum_{z} q(z) \left( \frac{1}{|\c{Y}|} \sum_{y \in \c{Y}} l\left(y, f^*(x)\right) \right) \tag{unbiased $\pi$} \\
    &= c \tag{balanced loss} \\
    &= \bb{E}_{f \sim \pi} [r_\text{exp}(f, p_Z)]
\end{align*}
In the last step, we assumed that the loss is balanced such that for any $y', y'' \in \c{Y}$: 
\begin{align}
    \sum_{y \in \c{Y}} l\left(y, y'\right) =  \sum_{y \in \c{Y}} l\left(y, y''\right)
\end{align}
Thus, the error induced by covariate shift reduces to the distance between prior and posterior: 
\begin{align*}
    r_{\text{dd}}(\pi_Z; \gamma) := \max_{q \in \c{Q}_\gamma} \bb{E}_{f \sim \pi_Z} [r_\text{exp}(f, q)] \leq \bb{E}_{f \sim \pi_Z} [r_\text{exp}(f, p_Z)] + 2 \, \delta(\pi_Z,\pi)  
\end{align*}
In other words, the smaller the distance between prior and posterior in weight space, the better they will generalize also out of distribution.
\end{proof}

\section{Proof of Theorem~\ref{theorem:rebalancing}}
\label{app:rebalancing}

\rebalancing*

\begin{proof}
The expected worst-case generalization error w.r.t. the uniform is given by 
\begin{align*}
    \sup_{f \in \c{F}} \left( r_\text{exp}(f; u) - r_w(f; p_Z) \right) 
    &= \sup_{f \in \c{F}} \left( r_\text{exp}(f; u) - r_w(f; p) + r_w(f; p) - r_w(f; p_Z) \right) \\
    &\leq \sup_{f \in \c{F}} \left( r_w(f; p) - r_w(f; p_Z) \right) + \sup_{f \in \c{F}} \left( r_\text{exp}(f; u) - r_w(f; p) \right).
\end{align*}
Let us start with the first term: 
\begin{align*}
    \bb{E}_{Z \sim p^n} \left[T_1 \right]
    &= \bb{E}_{Z \sim p^n} \left[\sup_{f \in \c{F}} \left( r_w(f; p) - r_w(f; p_Z) \right) \right]\\
    &= \bb{E}_{Z \sim p^n} \left[ \, \sup_{f \in \c{F}} \ \bb{E}_{z \sim p} \left[ w(x) \, l \left( f(x), y \right) \right] - \bb{E}_{z \sim p_Z} \left[ w(x) l(f(x), y)   \right]  \, \right]
\end{align*}

For any $f \in \c{F}$ and $z$, define $g(z) = l(f(x),y)$. 
By the sub-multiplicity of Lipschitz constants, we have
$$
    \|g\|_{L} \leq \|l\|_{L} \|f\|_{L} = \lambda \, \mu
$$
Further,
\begin{align*}
& \|w(x) g(x,y) - w(x') g(x',y')\| \\
&\leq \|w(x) g(x,y) - w(x) g(x',y')\| + \|w(x) g(x',y') - w(x') g(x',y')\| \\
&\leq \sup_{x} |w(x)| \|g(x,y) - g(x',y')\| + \sup_{x,y} |g(x,y)| \|w(x)- w(x')\| \\
&\leq \sup_{x} |w(x)| \|g\|_L \|(x,y) - (x',y')\| + \sup_{x,y} |g(x,y)| \|w\|_L \|x - x'\| 
\end{align*}
Substituting $g(x,y) = l(f(x),y) \in [0,1]$, we get
\begin{align*}
\frac{\|w(x) l(f(x),y) - w(x') l(f(x'),y')\|}{\|(x,y) - (x',y')\|} 
&\leq \sup_{x} |w(x)| \|l\|_{L} \|f\|_{L}  + \|w\|_L \frac{\|x - x'\|}{\|(x,y) - (x',y')\|} \\
&= \beta \lambda \, \mu  + \|w\|_L \frac{\|x - x'\|_2}{\sqrt{\|x - x'\|_2^2 + \|y - y'\|_2^2}} \\
&\leq \beta \lambda \, \mu  + \|w\|_L.
\end{align*}
Next, denote by $h(z) = w(x) l(f(x),y)$  and let $\c{H}$ be the corresponding hypothesis class. We have
\begin{align*}
    T_1
    &= \bb{E}_{Z \sim p^n} \left[ \, \sup_{\|f\|_L \leq \mu} \ \bb{E}_{z \sim p} \left[ w(x) \, l \left( f(x), y \right) \right] - \bb{E}_{z \sim p_Z} \left[ w(x) l(f(x), y)   \right]  \, \right] \\
    &\leq \bb{E}_{Z \sim p^n} \left[\sup_{h: \|h\|_{L} \leq \beta \lambda \, \mu  + \|w\|_L} \left( \bb{E}_{z \sim p} \left[ h(z) \right] - \bb{E}_{z \sim p_Z} \left[ h(z) \right] \, \right) \, \right] \\
    &= \left(\beta \lambda \, \mu  + \|w\|_L\right) \,  \bb{E}_{Z \sim p^n} \left[ W_1(p, p_Z) \right].
\end{align*}
where in the last step, we used the Kantorovich-Rubenstein duality theorem.
Moving on to the next term: 
\begin{align*}
    T_2
    &= \sup_{f \in \c{F}} \left( r_\text{exp}(f; u) - r_w(f; p) \right) \\
    &= \sup_{f \in \c{F}} \left( \int_{x} u(x) \, l \left( f(x), f^*(x) \right) dx - \int_{x} p(x) w(x) \, l \left( f(x), f^*(x) \right) dx \right) \\
    &= \sup_{f \in \c{F}} \int_{x} \left( u(x) - p(x) w(x) \right) \, l \left( f(x), f^*(x) \right) dx \\
    &\leq \int_{x} |u(x) - p(x) w(x)| dx \\
    &= \delta(u, \hat{u})
\end{align*}
where $\hat{u}(x) = p(x) w(x)$.

To characterize the generalization error difference between a random training set and the expectation we will need the following technical result:
\begin{lemma}
For any $l : \c{X} \to [0,1]$, the maximal generalization error
\begin{align}
    \varphi_w(Z) = \sup_{f \in \c{F}} \left( r_w(f, p) - r_w(f, p_Z) \right) 
\end{align}
obeys the bounded difference condition
\begin{align}
    \left| \varphi_w(Z) - \varphi_w(Z') \right| \le \frac{2}{n} \max_x  \left| w(x) \right| ,
\end{align}
where we have used the notation
\begin{align}
    Z = \{ z_1, \dots, z_j, \dots, z_n \} \quad \textrm{and} \quad Z' = \{z_1, \dots, z'_j, \dots, z_n \}.
\end{align}
\label{lemma:bounded_diff_uniform_conv}
\end{lemma}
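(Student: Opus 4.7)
The plan is to apply the classical bounded-differences calculation. Only the $j$-th summand of the weighted empirical risk $r_w(f, p_Z)$ changes between $Z$ and $Z'$, while the population quantity $r_w(f,p)$ does not depend on the sample at all, so the difference of the two suprema should collapse to a single-sample contribution that is uniform in $f$.

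Concretely, I would first use the elementary inequality $\sup_f A(f) - \sup_f B(f) \leq \sup_f (A(f) - B(f))$, applied in both orderings of $Z$ and $Z'$, to conclude that $|\varphi_w(Z) - \varphi_w(Z')| \leq \sup_{f \in \c{F}} |r_w(f, p_Z) - r_w(f, p_{Z'})|$, with the $r_w(f,p)$ contributions cancelling identically since they are independent of the training sample. Expanding the weighted empirical risk as $r_w(f, p_Z) = \frac{1}{n} \sum_{i=1}^n w(x_i) l(f(x_i), y_i)$, and recalling that $Z$ and $Z'$ coincide on all indices $i \neq j$, I obtain $r_w(f, p_Z) - r_w(f, p_{Z'}) = \frac{1}{n}\left(w(x_j) l(f(x_j), y_j) - w(x_j') l(f(x_j'), y_j')\right)$.

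Finally, since $l \in [0,1]$ by hypothesis and $|w(x)| \leq \max_x |w(x)|$ pointwise, each of the two terms on the right-hand side has absolute value at most $\frac{1}{n} \max_x |w(x)|$, so the triangle inequality yields the bound $\frac{2}{n} \max_x |w(x)|$ uniformly in $f \in \c{F}$. Passing to the supremum over $\c{F}$ preserves this constant, which gives the claim.

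I do not anticipate any real obstacle: this is a textbook McDiarmid-style verification whose sole role is to prepare the ground for a concentration inequality on $\varphi_w(Z)$ in the broader proof of Theorem~\ref{theorem:rebalancing}. The only mildly subtle manoeuvre is the sup-of-differences step, and it suffices that the bound on $|r_w(f, p_Z) - r_w(f, p_{Z'})|$ is independent of $f$, which is the case here.
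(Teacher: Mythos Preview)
Your proposal is correct and follows essentially the same route as the paper: both reduce $|\varphi_w(Z)-\varphi_w(Z')|$ to $\sup_{f}|r_w(f,p_Z)-r_w(f,p_{Z'})|$ via the sup-of-differences inequality, cancel all summands except the $j$-th, and bound the remaining single-sample term by $\tfrac{2}{n}\max_x|w(x)|$. The only cosmetic difference is that the paper inserts and subtracts $w(x_j')\,l(f(x_j),y_j)$ before applying the triangle inequality, whereas you bound $|w(x_j)l_j|$ and $|w(x_j')l_j'|$ directly; both arguments yield the same constant.
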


\begin{proof}
We first rewrite the difference as
\begin{align}
\left|  \varphi_w(Z) - \varphi_w(Z') \right|
&= \left| \sup_{f \in \c{F}} \left( r_w(f, p) - r_w(f, p_Z) \right) - \sup_{f \in \c{F}} \left( r_w(f, p) - r_w(f, p_{Z'}) \right) \right| \\
&\le \sup_{f \in \c{F}} \left| r_w(f,p_Z) - r_w(f,p_{Z'}) \right| ,
\end{align}
where we have used that $\sup_x f_1(x) - \sup_x f_2(x) \le \sup_x (f_1(x) - f_2(x) )$ in the last step. 

We continue by expanding the above expression:
\begin{align*}
    \sup_{f \in \c{F}} \left| r_w(f,p_Z) - r_w(f,p_{Z'}) \right| 
    &= \sup_{f \in \c{F}} \left| \frac{1}{n}\sum_{i=1}^n w(x_i) \, l(f(x_i), y_i) - \frac{1}{n}\sum_{i=1}^n w(x_i') \, l(f(x_i'), y_i') \right| \\
    &\hspace{-40mm}= \frac{1}{n} \sup_{f \in \c{F}} \left| w(x_j) \, l(f(x_j), y_j) -  w(x_j') \, l(f(x_j'), y_j') \right| \\
    &\hspace{-40mm}= \frac{1}{n} \sup_{f \in \c{F}} \left| w(x_j) \, l(f(x_j), y_j) - w(x_j') \, l(f(x_j), y_j)  + w(x_j') \, l(f(x_j), y_j)  -  w(x_j') \, l(f(x_j'), y_j') \right| \\
    &\hspace{-40mm}\leq \frac{1}{n} \sup_{f \in \c{F}} \left( |l(f(x_j), y_j)| \left| w(x_j) - w(x_j') \right|\ + \left| w(x_j') \right| \, \left| l(f(x_j), y_j) - l(f(x_j'), y_j') \right| \right) \\
    &\hspace{-40mm}\leq \frac{2}{n} \max_x  \left| w(x) \right|,
\end{align*}
as claimed, where in the last step we used that the loss is between zero and one, whereas the weights are always positive.
\end{proof}

Since $\varphi_w$ fulfils the bounded difference condition (see Lemma~\ref{lemma:bounded_diff_uniform_conv}), we can apply McDiarmid's inequality to obtain
\begin{align}
    \mathbb{P} \left[ \varphi_w(Z) - \bb{E}_Z \left[ \varphi_w(Z) \right] \ge \epsilon \right] \le \exp \left(-\frac{\epsilon^2 n}{4 \max_x  \left| w(x) \right|^2}\right)
\end{align}
This probability is below $\delta$ for $\epsilon^* \ge 2 \max_x  \left| w(x) \right| \sqrt{\frac{\ln(1 / \delta)}{n}}$, which immediately implies that
\begin{align*}
    \mathbb{P}\left[ \varphi_w(Z) < \bb{E}_Z \left[ \varphi_w(Z) \right] + \epsilon^* \right] = 1 - \mathbb{P} \left[ \varphi_w(Z) - \bb{E}_Z \left[ \varphi_w(Z) \right]  \ge \epsilon^* \; \right] \ge 1 - \delta,
\end{align*}
from which we conclude that the following holds
\begin{align}
    \sup_{f \in \c{F}} \left( r_w(f; p) - r_w(f; p_Z) \right) 
    \leq \bb{E}_{Z \sim p^n} \left[ \varphi_w(Z) \right] + 2 \max_x  \left| w(x) \right| \, \sqrt{\frac{2 \ln (1 / \delta)}{n}} 
    \label{eq:intermed_rade}
\end{align}
with probability at least $1 - \delta$. Combining~\eqref{eq:intermed_rade} with the previous results yields:
\begin{align*}
    \sup_{f \in \c{F}} \left( r_\text{exp}(f; u) - r_w(f; p_Z) \right) 
    &\leq \sup_{f \in \c{F}} \left( r_w(f; p) - r_w(f; p_Z) \right) + \sup_{f \in \c{F}} \left( r_\text{exp}(f; u) - r_w(f; p) \right) \\
    &\hspace{-20mm}\leq \bb{E}_{Z \sim p^n} \left[ \varphi_w(Z) \right] + 2 \max_x  \left| w(x) \right| \, \sqrt{\frac{2 \ln (1 / \delta)}{n}}  + \delta(u, \hat{u}) \\
    &\hspace{-20mm}\leq  \left(\beta \lambda \, \mu  + \|w\|_L\right) \,  \bb{E}_{Z \sim p^n} \left[ W_1(p, p_Z) \right] + 2 \max_x  \left| w(x) \right| \, \sqrt{\frac{2 \ln (1 / \delta)}{n}}  + \delta(u, \hat{u}),
\end{align*}
as claimed.
\end{proof}

\section{Estimating the uniform expected risk by importance sampling on a validation set}
\label{app:importance_sampling}

We consider that the validation samples used to estimate the uniform risk are drawn from some density $p(x)$ that is known up to normalization and are held out from training (crucially, $f$ is independent of the validation samples). 

We will estimate $r_\text{exp}(f;u)$ by the importance sampling estimate 
\begin{align}
    r_{\hat{\rho}}(f; p_Z) = 
    \frac{1}{n}\sum_{i=1}^n \hat{\rho}(x_i) \, l(f(x_i), y_i) 
    \quad \text{with} \quad
    \hat{\rho}(x) = p(x)^{-1}\, \frac{n}{\sum_{x' \in Z} p(x')^{-1}} \propto \rho(x) = \frac{u(x)}{p(x)}
\end{align}

To apply the theorem of Chatterjee and Diaconis~\citep{10.1214/17-AAP1326}, we note that for the 0-1 loss:
$$
    \|f\|_{L^2} 
    = \sqrt{\mathbb{E}_{(x,y) \sim u} [l(f(x), y)^2]}
    = \sqrt{\mathbb{E}_{(x,y) \sim u} [l(f(x), y)]}
    = \sqrt{r_\text{exp}(f; u)}, 
$$
whereas the KL-divergence is given by 
\begin{align*}
    D_\text{KL}(u \| p) 
    &= \sum_{x \in \c{X} } u(x) \log{\frac{1}{p(x)}} - \sum_{x \in \c{X} } u(x) \log{\frac{1}{u(x)}} 
    = H(u,p) - H(u).
\end{align*}

Suppose that $n = e^{D_\text{KL}(u \| p) + t}$ for some $t>0$ and fix
\begin{align*}
    \varepsilon^2 = e^{-t/4} + 2\sqrt{\zeta}
\end{align*}
with $\zeta \leq \mathbb{P}_{x \sim u} \left( \log( \rho(x)) < D_\text{KL}(u \| p)  + t/2 \right) $. Then it follows from~\citep{10.1214/17-AAP1326} that
\begin{align*}
    \mathbb{P}\left( |r_{\hat{\rho}}(f; p_Z) - r_\text{exp}(f; u) | \geq \frac{2 \, \sqrt{r_\text{exp}(f; u)}}{1/\varepsilon - 1} \right) \leq 2  \varepsilon.
\end{align*}

\end{document}